\newcommand{\SetData}[2]{$ #1  \leftarrow #2$}
\tikzset{
    cross/.pic = {
    \draw[rotate = 45,thick] (-#1,0) -- (#1,0);
    \draw[rotate = 45,thick] (0,-#1) -- (0, #1);
    }, pics/cross/.default=.09cm
}
\tikzset{scaled/.style={yscale=\maxy/10,xscale=\maxx/10}}
\tikzset{mySwatch/.style={inner sep=0pt, minimum size=10pt,fill,opacity=.3}}
\tikzset{myStar/.style={star,inner sep=0pt, minimum size=7pt,draw,fill=yellow!100!blue}}
\tikzset{dot/.style={radius=2pt,fill=white,fill}}
\tikzset{short/.style={-Stealth, shorten >=0.1cm,shorten <=.1cm}}
\tikzset{swatch/.style={radius=4pt}}
\def\maxx{5}    \def\maxy{5}
\def\minx{0}    \def\miny{0}
\def\max{\maxx,\maxy} \def\min{\minx, \miny}
\long\def\minMax{
  \coordinate (max) at (\max);
  \coordinate (min) at (\min);
  \coordinate (min-max) at (min|-max);
  \coordinate (max-min) at (min-|max);
}
\def\randx{\pgfmathsetmacro{\xpoint}{int(random(1,10))*\maxx/10}}
\def\randy{\pgfmathsetmacro{\ypoint}{random(1,10)*\maxy/10}\pgfmathresult}
\def\axes#1#2{
\draw[->] (min) -- (min -| max);
\draw[->] (min) -- (min |- max);
\node[below] at (max-min) {#1};
\node[left] at (min-max) {#2};
}
\def\dominator#1#2{
  \begin{pgfonlayer}{dom1}
\fill[fill=#2,opacity=.2] (#1) rectangle (max);
  \end{pgfonlayer}
}    
\def\dominating#1#2#3{
  \begin{pgfonlayer}{dom2}
\fill[fill=#2,opacity=#3] (#1) rectangle (min);
  \end{pgfonlayer}
}    
\def\pgfcurrentseed{%
\pgfmathparse{\pgfmath@rnd@z}\pgfmathresult%
}
\def\dominatorOpaque#1#2{
  \begin{pgfonlayer}{bg}
  \fill[fill=#2] (#1) rectangle (max);
  \end{pgfonlayer}
}    
\let\seed\relax
\let\x\expandafter
\def\glsjocfs{
\seed
}
\newglossaryentry{fs}{
name=fs,
user1={Z},
description={feasbile set}, 
text={\ensuremath{\glsjocfs}},
symbol={$\glsjocfs$}}
\def\glsjocpf{
\seed
}
\newglossaryentry{pf}{
name=pf,
user1={P},
description={Pareto front}, 
text={\ensuremath{\glsjocpf}},
symbol={$\glsjocpf$}}
\def\glsjocapf{
\overline{\seed}
}
\newglossaryentry{apf}{
name=apf,
user1={Z},
description={arg-front of the feasible set $\seed$}, 
text={\ensuremath{\glsjocapf}},
symbol={$\glsjocapf$}}
\def\glsjocipf{
\overline{\seed}
}
\newglossaryentry{ipf}{
name=apf,
user1={Y},
description={img-front of the feasible set $\seed$}, 
text={\ensuremath{\glsjocipf}},
symbol={$\glsjocipf$}}
\def\glsjocrpf{
\widetilde{\seed}
}
\newglossaryentry{rpf}{
user1={Y},
name=rpf,
description={aproximated Pareto front of the feasible set $\seed$}, 
text={\ensuremath{\glsjocrpf}},
symbol={$\glsjocrpf$}}
\def\glsjocpd{
        \mathbin
        \ifcase\seed        
            \prec\or
            \preceq\or
            \succ\or
            \succeq\fi
}
\newglossaryentry{pd}{
user1={0},
name=pd,
description={Pareto dominance relations}, 
text={\ensuremath{\glsjocpd}},
symbol={$\glsjocpd$}}
\def\glsjocf{
\seed
}
\newglossaryentry{f}{
user1={\phi},
name=f,
description={cnf formula}, 
text={\ensuremath{\glsjocf}},
symbol={$\glsjocf$}}
\def\glsjocfa{
\widetilde{\seed}
}
\newglossaryentry{rf}{
user1={\phi},
name=fa,
description={relaxation of formula $\seed$}, 
text={\ensuremath{\glsjocfa}},
symbol={$\glsjocfa$}}
\def\glsjock{
\seed
}
\newglossaryentry{k}{
user1={K},
name=k,
description={set of k of some formula}, 
text={\ensuremath{\glsjock}},
symbol={$\glsjock$}}
\def\glsjocass{
\seed
}
\newglossaryentry{ass}{
user1={A},
name=ass,
description={assumptions of a SAT call}, 
text={\ensuremath{\glsjocass}},
symbol={$\glsjocass$}}
\def\glsjocass{
\seed
}
\newglossaryentry{ent}{
user1={\vDash},
name=ent,
description={entailment}, 
text={\ensuremath{\glsjocass}},
symbol={$\glsjocass$}}
\def\glsjocov{
\seed
}
\newglossaryentry{ov}{
user1={O},
name=ov,
description={set of order variables}, 
text={\glsjocov},
symbol={$\glsjocov$}}
\def\glsjocinc{
\seed
}
\newglossaryentry{inc}{
user1={I},
name=inc,
description={incumbent list}, 
text={\glsjocinc},
symbol={$\glsjocinc$}}
\def\<#1>{\gls{#1}}
\def\|#1|{$\gls{#1}$}
\def\ofv{F}  
\def\f{\<f>} 
\DeclareMathOperator{\ifront}{img\, front} 
\DeclareMathOperator{\afront}{front}
\def\satcall#1#2#3{                
#1#2\text{-}\text{\tt\<sat>}(#3)} 
\def\fa{\<f>[\theta]}              
\def\A{\alpha}                     
\def\k{\kappa}                     
\def\ov{o}                         
\def\ovs{O}                        
\def\mx{\boldsymbol{x}}            
\def\mxx{\boldsymbol{x}'}          
\def\m{\nu}                        
\def\o{\boldsymbol{y}}             
\def\oo{\boldsymbol{y}'}           
\def\pf{\<pf>}                     
\def\apf{\<apf>}                   
\def\ipf{\<ipf>}                   
\def\moco++{$\text{\<moco>}$} 
\def\inc{\<inc>}                   
\def\mocoInput{\KwIn{$ \pbrac{\f, V, \ofv}$ \tcp*[f]{MOCO instance}}}
\def\mocoOutput{\KwOut{$\afront_{\f}\ofv$\tcp*[f]{one arg-front}}}
\def\US*{{\tt Core-Guided}}
\def\USS*{{\tt Core-Guided-Strat}}
\def\UST*{{\tt UnsatSatMSU3}}
\def\PM*{{\tt PMinimal}}
\def\HS*{{\tt Hitting-Sets}}
\def\PMCS*{{\tt ParetoMCS}}
\def\GIA*{{\tt GIA}}
\def\OIA*{{\tt OIA}}
\newcommand{\CoreGuidedAlg}{\texttt{Core\--Guided}\xspace}
\newcommand{\CoreGuidedAlgStrat}{\texttt{Core\--Guided\--Strat}\xspace}
\newcommand{\HittingSetsAlg}{\texttt{Hitting\--Sets}\xspace}
\newcommand{\PMinimalAlg}{\texttt{P-Minimal}\xspace}
\newcommand{\ParetoMCSAlg}{\texttt{ParetoMCS}\xspace}
\def\ent{\<ent>}      
\def\false{\bot}
\def\true{\top}
\def\PBool{\brac{0, 1}}
\def\rhs*{rhs}
\def\lhs*{lhs}
\def\ie,{i.e.}
\def\eg*{e.g.}
\def\ofv{F}               
\def\ds{X}
\def\os{Y}
\def\ov{o}                
\def\ovs{O}               
\def\fss{\<fs>[T]}                 
\def\fs{\<fs>}            
\def\rfs{\<fs>[T]}        
\def\f{\<f>}              
\def\ff{\<f>[\psi]}       
\def\rf{\psi}             
\def\rff{\psi'}           
\def\rffi{\<f>[\psi_{1}]} 
\def\rffx{\<f>[\psi_{i}]} 
\def\rffn{\<f>[\psi_{n}]} 
\def\apf{\<apf>[\fs]}     
\def\rapf{\<apf>[\rfs]}   
\def\rapffi{\<apf>[T_1]}  
\def\rapffx{\<apf>[T_i]}  
\def\rapffn{\<apf>[T_n]}  
\def\ipf{\<ipf>}          
\xdef\diagno{\Delta}      
\newacronym{moco}{MOCO}{Multi-Objective Combinatorial Optimization}
\newacronym{mcs}{MCS}{Minimal Correction subset}
\newacronym{mspv}{MSPV}{Minimal Set of Positive Variables}
\newacronym{cnf}{CNF}{Conjunctive Normal Form}
\newacronym{pb}{PB}{Pseudo-Boolean}
\newacronym{pbo}{PBO}{Pseudo-Boolean Optimization}
\newacronym{csp}{CSP}{Constraint Satisfaction Problem}
\newacronym{ipm}{IPM}{Infeasible Partial Model}
\newacronym{sat}{SAT}{Satisfiability}
\newacronym{vmc}{VMC}{Virtual Machine Consolidation}
\newacronym{spl}{SPL}{Software Product Line}
\newacronym{ftp}{FTP}{Flying Tourist Problem}
\newacronym{sc}{SC}{Set Covering}
\newacronym{dal}{DAL}{Development Assurance Levels}
\newacronym{msat}{MaxSAT}{Maximum Satisfiability}
\newacronym{oia}{\OIA*}{Opportunistic Improvement Algorithm}
\newacronym{gia}{\GIA*}{Guided Improvement Algorithm}
\newacronym{sd}{SD}{Selection Delimiter}
\newacronym{swc}{SWC}{Sequential Weighted Counter}
\newacronym{gte}{GTE}{Generalized Total Encoder}
\newacronym{lbs}{LBS}{Lower Bound Set}
\def\into{\rightarrow}
\title{New Core-Guided and Hitting Set Algorithms for Multi-Objective Combinatorial Optimization} 
\titlerunning{New Core-Guided and Hitting Set Algorithms for MOCO} 
\author{Jo\~ao Cortes}{INESC-ID - Instituto Superior Técnico, Universidade de Lisboa, Portugal}{joao.cortes@tecnico.ulisboa.pt}{https://orcid.org/0000-0003-4833-8054}{}
\author{In\^es Lynce}{INESC-ID - Instituto Superior Técnico, Universidade de Lisboa, Portugal}{ines.lynce@tecnico.ulisboa.pt}{https://orcid.org/0000-0003-4868-415X}{}
\author{Vasco Manquinho}{INESC-ID - Instituto Superior Técnico, Universidade de Lisboa, Portugal}{vasco.manquinho@tecnico.ulisboa.pt}{https://orcid.org/0000-0002-4205-2189}{}
\authorrunning{J. Cortes and I. Lynce and V. Manquinho} 
\keywords{Multi-Objective Combinatorial Optimization, Unsatisfiability Algorithms, Hitting Sets} 
\begin{document}

\maketitle

\begin{abstract}
In the last decade, a plethora of algorithms for single-objective
Boolean optimization has been proposed that rely on the iterative
usage of a highly effective Propositional Satisfiability (SAT)
solver. But the use of SAT solvers in Multi-Objective Combinatorial
Optimization (MOCO) algorithms is still scarce. Due to this shortage
of efficient tools for MOCO, many real-world applications formulated
as multi-objective are simplified to single-objective, using either a
linear combination or a lexicographic ordering of the objective
functions to optimize.

In this paper, we extend the state of the art of MOCO solvers with two
novel unsatisfiability-based algorithms. The first is a core-guided
MOCO solver. The second is a hitting set-based MOCO
solver. Experimental results obtained in a wide range of benchmark
instances show that our new unsatisfiability-based algorithms can
outperform state-of-the-art SAT-based algorithms for MOCO.

\end{abstract}

\section{Introduction}
\label{sec:intro}
It is ubiquitous in real-world problems to try to optimize several
objectives simultaneously. For instance, when making a vacation plan
with multiple destinations, one wants to minimize the time spent in
airports as well as the total amount spent on the plane tickets.
However, in situations with multiple objectives, one can rarely obtain
a solution that minimizes all objective functions. It is usually the
case that decreasing the value of an objective function results in
increasing the value of another objective function.  This occurs in
many application
domains~\cite{DBLP:journals/fgcs/LiZLY20,DBLP:journals/eswa/MarquesRR19,DBLP:journals/tse/YuanB20}.

One way to deal with multi-objective problems is to transform them into
single-objective. For example, this can be achieved by defining a linear
combination of the objective functions. However, the weight of each
objective function is hard to determine. Another option is to define
a lexicographic order of the functions~\cite{BolLexiMarques-Silva2011},
but this might result in an unbalanced solution where the first function is
minimized while the remaining ones have a high value.

Another approach is to determine the Pareto front of the multi-objective
problem. In this case, we are interested in finding all Pareto-optimal
solutions, \ie, all solutions for which one cannot decrease the value of
any of the objective functions without increasing the value of another.
After determining the Pareto front, one can select a representative
subset of solutions to present the user~\cite{DBLP:journals/cor/GuerreiroMF21}.

Several frameworks based on stochastic search have been developed to 
approximate the Pareto front of Multi-Objective Combinatorial Optimization (MOCO)
problems~\cite{DBLP:conf/ppsn/DebAPM00,DBLP:journals/tec/ZhangL07}. 
There are also several exact algorithms based on iterative calls to a
satisfiability checker, such as the Opportunistic Improvement
Algorithm~\cite{DBLP:conf/ecai/Gavanelli02}.
Additionally, the Guided-Improvement Algorithm (GIA)~\cite{jackson2009guided}, is
implemented in the optimization engine of Satisfiability Modulo Theories (SMT)
solver \texttt{Z3} for finding Pareto optimal solutions of SMT formulas.
More recently, new algorithms based on the enumeration of Minimal Correction
Subsets (MCSs)~\cite{DBLP:conf/sat/Terra-NevesLM17} or 
$P$-minimal models~\cite{DBLP:conf/cp/SohBTB17} have been proposed.
A common thread to these iterative and enumeration algorithms is that
they follow a SAT-UNSAT approach.

In this paper, we propose two new UNSAT-SAT algorithms for MOCO.  In
the first algorithm, an unsatisfiable core-guided approach is used
that relies on encodings of the objective functions to cut effectively
the search space in each SAT call.  Additionally, we also propose a
hitting set based approach for MOCO where the previous core-guided
algorithm is used to enumerate a multi-objective hitting set.
Experimental results show that the proposed core-guided approach is
complementary to the existing SAT-based algorithms for MOCO, thus
extending the state of the art tools for MOCO based on SAT technology.

The paper is organized as follows. Section~\ref{sec:prelim} defines
the Multi-Objective Combinatorial Optimization problem, as well as common
notation used in the remainder of the paper.
Next, sections~\ref{sec:unsat} and~\ref{sec:hitting-set} describe the
new core-guided and hitting set-based algorithms for MOCO, as well as
proofs of correction. Experimental results and comparison with other
SAT-based algorithms are provided in section~\ref{sec:results}.
Finally, conclusions are presented in section~\ref{sec:conc}.

\section{Preliminaries}
\label{sec:prelim}
We will start with the definitions that fall in the SAT
domain. Then, we introduce the definitions specific to the problem at
hand, namely, solving MOCO problems.

\begin{definition}[Boolean \glsentrylong{sat} problem (\glsentrytext{sat})]\label{def:cnf}
  \label{def:sat}
  Consider a set of Boolean variables $V=\{x_1, \ldots, x_n\}.$ A
  \emph{literal} is either a variable $x_i \in V$ or its negation
  $\lnot x_i \equiv \bar x_i$. A \emph{clause} is a set of
  \emph{literals}, and a \emph{unary clause} contains just one
  literal. \emph{A \<cnf> formula} $\f$ is a set of clauses. A
  \emph{model} $\m$ is a set of literals, such that if $x_i \in \m$,
  then $\bar x_i \not \in \m$ and vice versa.

  The \emph{truth value} of $\f$, denoted by $\m(\f)$, is a function
  of $\m$, and is defined recursively by the following rules. First,
  the truth value of all literals is covered by
  \begin{gather*}
    \label{eq:4}
    \m(x_i) = \true, \text{ if }x_i \in \m, \\
    \m(x_i) = \false, \text{ if }\bar x_i \in \m, \\
    \m(\lnot x_i) = \lnot \m(x_i).
  \end{gather*}
  (We say $\m$ \emph{assigns} the value $\m(x_i)$ to the variable
  $x_i$ and $\lnot \m(x_i)$ to $\bar x_i$.) 
  
  Secondly, a clause $c$ is true iff it contains at least one literal
  assigned to true. Finally, formula $\f$ is true iff it contains only
  true clauses,
  \begin{equation}
    \label{eq:70}
    \m(\f) \equiv \bigwedge_{c \in \f} \m(c),\quad
    \m(c) \equiv \bigvee_{l \in c} \m(l).
  \end{equation}
  The model $\m$ \emph{satisfies} the formula $\f$ iff $\m(\f)$ is
  true. In that case, $\m$ is \emph{($\f$-)feasible}. A set of
  models is feasible iff all its elements are feasible. 

  The \emph{Boolean Satisfiability problem}, known as \<sat>
  problem, reads as follows: Given a \<cnf> formula $\f$, decide if
  there is any model $\m$ that satisfies it. In other words,
    decide if $ \exists \m: \m(\f)$.
%
  In that case $\f$ is a \emph{satisfiable} formula. Otherwise, it
  is \emph{unsatisfiable}.
\end{definition}

Our algorithms require a \<sat> solver, to be used as an Oracle. As
they run, they place queries to the Oracle, and act accordingly to its
replies. Note that a {\em true} \<sat> Oracle only answers {\em yes} or
{\em no}.
Our oracle, if the problem is unsatisfiable, replies with an explanation
of why it is so, called a \emph{core} (Definition~\ref{def:unsatCore}).
The following interface suffices for our intended use.

\begin{definition}[\<sat> solver]
  \label{def:sat-solver}
  Let $\f, \A$ be \<cnf> formulas. We call $\f$ the \emph{main
    formula} and $\A$ the \emph{assumptions}. A \<sat> solver solves
  the \<cnf>\footnote{ We may use a \<pb>
    formula(Definition~\ref{def:pb}). In that case, we assume the
    solver first translates it into \<cnf>.} instance of the
  \emph{working formula} $\omega = \f \cup \A$, \ie, decides on the
  satisfiability of $\omega$.

  A query to the solver is denoted by $\satcall{\f}{}{\A}$.
  The value returned is a pair $(\m, \k)$, containing a feasible
  model $\m$ and a \emph{core of assumptions} $\k$, \ie, a subset of
  the assumptions $\A$ contained in some \emph{core} of $\omega$.
  If the working formula $\omega$ is not satisfiable, $\m$ does not
  exist, and the call returns $(\emptyset, \bullet)$. If $\omega$ is
  satisfiable, the call returns $(\bullet, \emptyset)$.
\end{definition}

\begin{definition}[core $\k$]\label{def:unsatCore}
  Given a \<cnf> formula $\f$, we say a formula $\k$ is an
  \emph{unsatisfiable~core}~of~$\f$ iff
  \begin{align*}
    \k \subseteq \f, \\
    \k \<ent> \false.
  \end{align*}

\end{definition}

As long as a core $\k$ of formula $\f$ is not broken, by removing some
of its clauses from $\f$, the formula will remain unsatisfiable. 
Breaking it will not necessarily make the formula satisfiable, though.

\begin{definition}[\glsentrylong{mcs} (\glsentrytext{mcs})]\label{def:correction-subset}
  Let $\mu \subseteq \f$, for some \emph{unsatisfiable} \<cnf> formula
  $\f$. If $\f \setminus \mu$ is satisfiable, then $\mu$ is called a
  \emph{correction subset} of $\f$. If there is no other correction
  subset $\mu' \subset \mu$, then $\mu$ is a \emph{\<mcs>}.
\end{definition}

{\em Correcting} a formula by dropping some clauses is a particular case
of a \emph{relaxation}. If the dropped set is a correction subset,
then the obtained formula is necessarily satisfiable.

\begin{definition}[relaxing/tightening a formula]\label{def:rcnf}
  Given $\f$, we call a formula $\rf$ a \emph{relaxation} of
  $\f$ iff $\f \ent \rf$.
  We also say $\rf$ \emph{relaxes} $\f$. Conversely, $\f$
  \emph{tightens} $\rf$.
\end{definition}

Now we will review the definitions related directly to the problem we
want to solve. It is called \emph{\<moco>} (Definition~\ref{def:moco}), and
it is a generalization of \emph{\<pbo>} (Definition~\ref{def:pbo}). The
\emph{objective functions} are \<pb> (Definition~\ref{def:pb}), and so are
the \emph{clauses}. Note that the \<pb> clauses generalize the clauses
of propositional logic.

\begin{definition}[\glsentrylong{pb} function, clause, formula
  (\glsentrytext{pb})] \label{def:pb} To any linear function
  $\brac{0, 1}^n \into \Nb$, given by 
  \begin{equation}
    g(\mx) = g(x_1\ldots x_n) = \sum_i w_i x_i \quad w_i \in \Nb,
  \end{equation}
  we call an \emph{(integer linear) \<pb> function}. 
  Expressions like
  \begin{equation}
    g(\mx) \bowtie k, \quad \bowtie \;\in \lbrace \le, \ge, = \rbrace,
  \end{equation}
  are called \emph{\<pb> clauses}. A \emph{\<pb> formula} is a set of
  \<pb> clauses. Let $\mx$ be the Boolean tuple
  $\m(V) \equiv (\m(x_1),\ldots,\m(x_n))$.  A model $\m$ satisfies a
  clause $c$ if $c(\mx)$ is $\true$. Given a formula $\f$, a model
  $\m$ is said \emph{($\f$-)feasible} if it satisfies every clause in
  $\f$. If a feasible model $\m$ exists, then $\f$ is
  \emph{satisfiable}, and $\m$ \emph{satisfies} $\f$.  The set of
  Boolean tuples
  $\fs = \brac{\mx \in \brac{0, 1}^n: \exists \m: \f(\m), \mx =
    \m(V)}$ is called \emph{feasible space} of the formula $\f$, and
  its elements $\mx$ are called \emph{feasible points}. Any subset of
  the feasible space is called a \emph{$\f$-feasible set}.
\end{definition}
Note that there is a one-to-one relation between the feasible points
of a formula, as defined above, and the set of feasible models. It is
given by $\mx = \m(V)$. This redundancy is meant to help combine the
notations.

\begin{definition}[\glsentrylong{pbo}, \glsentrytext{pbo}]
\label{def:pbo}
Let $\f$ be a \<pb> formula, and $f$ be a \<pb> function. Then,
minimize the value of the objective over the feasible space $\fs$ of the
formula. That is,
  \begin{equation}
    \text{find } \argmin_{\mx \in \fs(\f)} f.
  \end{equation}
\end{definition}

We will generalize this problem to the multi-objective case, but
first let us introduce what we mean by optimizing several objectives at
once.

Multi-objective optimization builds upon a criterion of comparison (or
order) of tuples of numbers. The most celebrated one is called
\emph{Pareto order or dominance} (Definition~\ref{def:pareto_order}).

\begin{definition}[Pareto partial order ($\gls{pd}$)]
  \label{def:pareto_order}
  Let $\os$ be some subset of $\Nb^n$.
  For any $\o, \oo \in \os$,
  \begin{gather*}
    \o\<pd>[1] \oo \iff \forall i, \o_i \leq \oo_i, \\
    \o\<pd>[0] \oo \iff     \o\<pd>[1] \oo \land \o \neq \oo, \\
    \o\<pd>[2] \oo \iff \oo \<pd>[0] \o, \\
    \o\<pd>[3] \oo \iff \oo \<pd>[1] \o.
  \end{gather*}

  We say $\o$ \emph{dominates} $\oo$ iff $\o \<pd>[1] \oo$. We say
  $\o$ \emph{strictly-dominates} $\oo$ iff $\o \<pd>[0] \oo$.
\end{definition}

Given a tuple of objective functions sharing a common domain $\ds$, we
can compare two elements $\mx,\mxx \in \ds$ by comparing the
corresponding tuples in the objective space.

\begin{definition}[Pareto Dominance ($\gls{pd}$)]
  Let $\ofv : \ds \rightarrow \os \subseteq \Nb^n$ be a
  \emph{multi-objective function}, mapping the \emph{decision space
    $\ds$} into the \emph{objective space $\os$}. For any
  $\mx, \mxx \in \ds$,
  \begin{gather*}
    \mx\<pd>[0] \mxx  \iff \ofv(\mx) \<pd>[0] \ofv(\mxx), \\
    \mx\<pd>[1] \mxx \iff \ofv(\mx) \<pd>[1] \ofv(\mxx), \\
    \mx\<pd>[2] \mxx \iff \mxx \<pd>[0] \mx, \\
    \mx\<pd>[3] \mxx \iff \mxx \<pd>[1] \mx.
  \end{gather*}
  We say $\mx$ \emph{dominates} $\mxx$ iff $\mx \<pd>[1] \mxx$. We say
  $\mx$ \emph{strictly-dominates} $\mxx$ iff $\mx \<pd>[0] \mxx$.
\end{definition}

One consequence of this choice of comparison criterion is that most
such optimizations have many different {\em good} solutions mapped to
different points in the objective space, contrary to what happens in
the single-objective case. Therefore, the solution of the problem is
actually a set, traditionally called \emph{Pareto front}. Its elements
are good in the sense that for each one there is no other that can, in
conscience, vouch for its removal.

\begin{definition}[Fronts]\label{def:pareto-front}
  Given a a multi-objective function $\ofv:\ds\rightarrow \os$ and a
  \emph{feasible space} $\fs \subseteq \ds$, the \emph{Pareto front} of $\fs$
  is a subset $\pf \subseteq \fs$ containing all elements that are not
  strictly-dominated,
  \begin{equation*}
    \pf=\brac{\mx \in \fs: \not\exists \mxx : \mxx \<pd> \mx}.
  \end{equation*}
  We call \emph{img-front} to the subset $\ipf \subseteq \os$ which is the image
  of $\pf$ by $\ofv$,
  \begin{equation*}
    \ipf \equiv \ifront_{\fs} \ofv = \brac{\o \in \os: \o = \ofv(\mx), \mx \in P}.  
  \end{equation*}
  Finally, we call \emph{arg-front of $\fs$}, or simply
  \emph{$\afront$ of $\fs$}, to any subset $\apf$ of the Pareto Front
  $P$ that is mapped by $\ofv$ into $\ifront_{\fs} \ofv$, in a
  one-to-one fashion. We will use the notation
  \begin{equation*}
    \apf = \afront_{\fs} \ofv.
  \end{equation*}
\end{definition}

The problem we want to solve is the following multi-objective
generalization of \<pbo> (Definition~\ref{def:pbo}).

\begin{definition}[\<moco>]
  \label{def:moco} Let $\ofv : \ds \rightarrow \os \subseteq \Nb^n$ be
  a \emph{multi-objective \<pb> function}, mapping the \emph{decision
    space $\ds \subseteq \PBool^n$} into the \emph{objective space
    $\os$}. Let $\fs \subseteq \ds$ be the feasible space of some
  \<pb> formula $\f$, with variables in $V$. Then,
  \begin{equation}
    \text{find } \afront_{\fs(\f)} \ofv.
  \end{equation}
  An instance will be denoted by the triple $\pbrac{\f, V,\ofv}$.
\end{definition}

Because the solutions of the problems are sets, bounds are now
\emph{bound sets} (Definition~\ref{def:bound-sets}). In the single objective
case, a bound is a value $l$ such that $\forall y = f (x): l \leq y $,
or equivalently, $\not\exists y = f (x): l > y$. This equivalence is
broken by the generalization. Each of the above defining properties of
a lower bound give rise to a differently flavoured comparison of sets
(Definitions \ref{def:coverage-set} and \ref{def:non-inferior-set}).

\begin{definition}[set coverage]
  \label{def:coverage-set}
  Let $A$ and $B$ be subsets of some decision space $\ds$, equipped
  with a multi-objective function $\ofv$. Then, \emph{$A$ covers $B$}
  iff every element of $B$ is dominated by some element of $A$, \ie,
  $\forall b \in B, \exists a \in A: a \<pd>[1] b,$ and
\emph{$A$ strictly covers $B$} iff
  $\forall b \in B, \exists a \in A: a \<pd>[0] b.$
\end{definition}

\begin{definition}[set non-inferiority]
  \label{def:non-inferior-set}
  Let $A$ and $B$ be subsets of some decision space $\ds$, equipped
  with a multi-objective function $\ofv$. Then \emph{$A$ is
    non-inferior to $B$} iff there is no element of $B$ that
  strictly-dominates an element of $A$,
  $\forall a \in A, b \in B: \lnot(a \<pd>[2] b),$ and \emph{$A$ is
    strictly non-inferior to $B$} iff
  $\forall a \in A, b \in B: \lnot(a \<pd>[3] b)$.
\end{definition}

The term {\em non-inferior} is admittedly a poor name choice. The only
favoring argument is that the stronger candidate {\em non-dominated} is
usually reserved for the fronts themselves.

Note that in the single objective case non-inferiority and coverage
are the same thing. Therefore, the next definition generalizes
correctly the notion of lower bound.

\begin{definition}[bound sets]\label{def:bound-sets}
  $L \subseteq \ds$ is a \emph{(strictly) lower bound set of
    $\fs \subseteq X$} iff $L$ (strictly) covers and is
  (strictly) non-inferior to $\fs$. If $L$ is a lower bound set of
  $\fs$, we say $L \<pd>[1] \fs$. If it is a strictly lower bound set,
  we say $L \<pd>[0] \fs$.
\end{definition}

One way to generate a lower bound set of some Pareto front is to solve
a related problem, where the formula is replaced by a relaxed version
(Definition~\ref{def:rcnf}).

In order to guide the search we will need to embed dominance relations
into \<cnf> formulas. For instance, we are interested in ensuring we
do not spend time looking for solutions that are dominated by some
other known feasible solution. But how can we express it on a \<sat>
query? We need to translate the requirement into a \<cnf> formula. A
particular example of such a translator is called an \emph{unary
  counter}. In particular, they have been used to implement efficient
\emph{\<pb> satisfiabity solvers} that simply forward \<pb> queries
into a \<sat> solver after translating them.

\begin{definition}[Unary Counter]\label{def:unary-counter}
  Let $f_{i}: \PBool^m \into \Nb$ be a \<pb> function and set $V$ be an
  ordered set of variables that parametrize the domain of $f_{i}$,
\begin{gather}
  \label{eq:39}
  V = \brac{x_1, \ldots, x_m}, f_{i}(\mx) = f_{i}(x_1, \ldots, x_m).
\end{gather}
Consider the \<cnf> formula $\fa$ with variables $V\cup O$, where
$O \cap V = \emptyset$ and $O$ contains one variable $\ov_{i,k}$ for each
value $k \in \Nb: \exists \mx: k = f_{i}(\mx)$.  The elements of $\ovs$
are the \emph{order variables}. We call the tuple
$\pbrac{f_{i}, V, \ovs, \fa}$ an unary counter of $f_{i}$ iff all feasible models $\m$
of $\fa$ satisfy
\begin{equation}
  \label{eq:counter_semantics}
  f_{i}(\mx) \geq k \implies \ov_{i,k},\quad \mx = \m(V).
\end{equation}
\end{definition}

\section{Unsatisfiability-based Algorithm}
\label{sec:unsat}
Although core-guided algorithms for single-objective problems such as Maximum
Satisfiability~\cite{DBLP:conf/vlsid/FuM07,wmsu3-corr07,DBLP:conf/sat/ManquinhoSP09,DBLP:conf/sat/AnsoteguiBL09,DBLP:conf/cp/AnsoteguiBGL13}
have been initially proposed more than one decade ago, to the best of our
knowledge, there is no such algorithm for MOCO. The main goal of our algorithm
is to take advantage of unsatisfiable cores identified by a SAT solver in order
to lazily expand the allowed search space.

\subsection{Algorithm Description}
\def\scale{.9}
\def\Alpha{\mathrm{A}}
\def\visible<#1>{}
      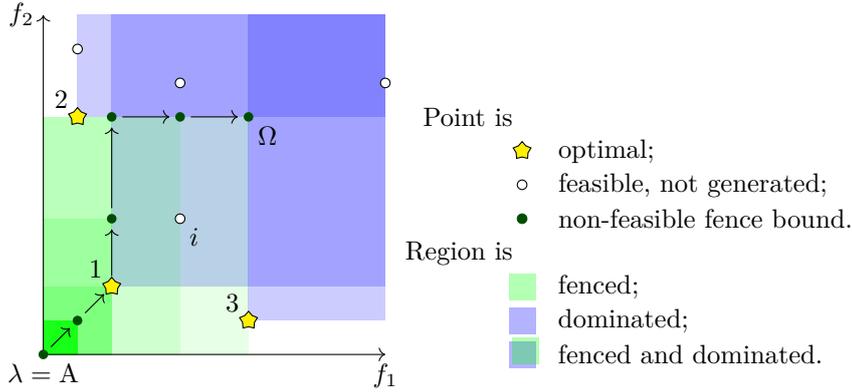
\begin{figure}[t]
        {
          \begin{tikzpicture}[scale=\scale,ul/.style={dot,green!30!black},
            myScale/.style={xscale=\maxx/10,yscale=\maxy/10},
            myArrow/.style={shorten >=4,shorten <=4,->}
            ]
            \minMax{}
            \axes{$f_1$}{$f_2$}
            \path  (min -| max) ++(0,2) to coordinate (legend) (max);
            \path (legend) ++(2,0) coordinate (legend) 
            node[anchor=east] {Point is};      
            \path (legend) ++(0,-.5) coordinate (legend) 
            node[myStar] {} node[anchor=west,xshift=10pt] {optimal;};
            \path (legend) ++(0,-.5) coordinate (legend) 
            node[anchor=west,xshift=10pt] {feasible, not generated;} [draw,dot] circle[];
            \path (legend) ++(0,-.5) coordinate (legend)
            node[anchor=west,xshift=10pt] {non-feasible fence bound.} [draw,fill,dot,ul] circle[];
            \path (legend) ++(0,-.5) coordinate (legend)
            node[anchor=east] {Region is};      
            \path (legend) ++(0,-.5) coordinate (legend) 
            node[anchor=west,xshift=10pt] {fenced;} node[mySwatch,green]{};
            \path (legend) ++(0,-.5) coordinate (legend)
            node[anchor=west,xshift=10pt] {dominated;} node[mySwatch,blue]{};
            \path (legend) ++(0,-.5) coordinate (legend)
            node[anchor=west,xshift=10pt] {fenced and dominated.}
            node[mySwatch,blue]{} ++(.05, .05)
            node[mySwatch,green]{};
            
            \pgfmathsetseed{10}
            \node[below right,white] at (min) {\pgfcurrentseed};
      \foreach \c/\l in {1/2,2/3,3/5,4/4,5/1,6/6,7/7,8/8,9/9,10/10}{
        \randx \randy
        \path
        (\xpoint, \ypoint) coordinate (p\l);
      }
      \foreach \c in {2,3,1}{
        \randx \randy
        \path
        (p\c)  node[above left] {\c};
      }
      \path[scaled] (p4) ++ (1,1) coordinate (p4)  node[below right] {$i$};

            \visible<2>{
              \path (0,0) coordinate (ul); 
              \node[anchor = north] at (ul) {$\lambda = \Alpha$};
              \fill[ul] (ul)  circle[];
              \dominating{ul}{green}{.1};
            }
            \draw[myScale, myArrow] (ul) -- (1,1) coordinate (ul);
            \fill[ul] (ul)  circle[];
            \dominating{ul}{green}{.8};

            \visible<4>{
              \draw[myScale, myArrow] (ul) -- (2,2) coordinate (ul);
              \fill[ul] (ul)  circle[];
              \dominating{ul}{green}{.2};
            }
            \visible<5->{
              \dominator{p1}{blue};
            }
            
            \visible<6->{\node[myStar] at  (p1) {} ;}
            \visible<7>{
              \draw[myScale, myArrow] (ul) -- (2,4) coordinate (ul);
              \fill[ul] (ul)  circle[];
              \visible<7>{\dominating{ul}{green}{.15};}
            }

            \visible<9>{
              \draw[myScale, myArrow] (ul) -- (2,7) coordinate (ul);
              \fill[ul] (ul)  circle[];
              \dominating{ul}{green}{.1};
            }
            \visible<10->{
              \dominator{p2}{blue};
            }
            \visible<10->{\node[myStar] at  (p2) {} ;}

            \visible<11>{
              \draw[myScale, myArrow] (ul) -- (ul -| p4)  coordinate (ul);
              \fill[ul] (ul)  circle[];
              \dominating{ul}{green}{.1};
            }

            \draw[myScale, myArrow] (ul) -- (6,7)  coordinate (ul);
            \fill[ul] (ul)  circle[];
            \dominating{ul}{green}{.1};
            \node[anchor = north west] at (ul) {$\Omega$};

            \visible<12->{
              \fill[dot] (p3)  circle[];
              \dominator{p3}{blue};
            }
            \visible<13->{\node[myStar] at  (p3) {} ;}
      \foreach \c in {4,7,8,9}{
        \draw[dot] (p\c)  circle[];
      }
          \end{tikzpicture}
        }
        \caption{Illustration of a run of \US*
          (Algorithm~\ref{alg:moco-unsat}) in the objective space.
          The img-front is the set $\brac{1,2,3}$. The fence bound
          $\lambda$ gets updated at each iteration of the while cycle
          at line~\ref{algl:main-loop}, starting at $\Alpha$ and
          ending at $\Omega$. The arrows are guided by the core $\k$
          (line~\ref{alg:19}). The green shading represents the
          evolution of the fence. Darker regions have been fenced for
          longer. The blue regions are blocked by optimal
          points. Darker regions are dominated by more points. We will
          be done in $7$ iterations. After verifying that $\Alpha$ is
          not feasible, we are instructed by the cores $k$ to move
          along the diagonal twice. We find point $1$
          fenced. Therefore the associated $\mx$ is copied into $\inc$
          and the dominated region is blocked. We extend $\lambda$
          twice, and find point $2$. After moving once more, we find
          part of the fence blocked, and the point branded with $i$ is
          never generated. The next movement stations $\lambda$ at
          $\Omega$. Point $3$ is found. The Oracle acknowledges we are
          done, by returning $\k =\emptyset$ (line~\ref{alg:9}): she
          knows that no movement of $\lambda$ will extend
          $\inc$. }\label{fig:unsatSat}
      \end{figure}

\begin{algorithm}[t]
\caption{Core-Guided MOCO solver \US*}
    \label{alg:moco-unsat}
        \SetKwFunction{EncodeOrder}{EncodeOrder}
        \SetKwFunction{myNext}{next}
        \mocoInput
        \mocoOutput
        \SetData{m}{|F|}\\
        \SetData{\inc}{\emptyset}\\
        \SetData{(\fa, \brac{\ovs_i}_{1 \leq i \leq m})}{\EncodeOrder(F, V)}\tcp*[f]{build unary counters for each $f_i$}\\\label{algl:cnf-enc}
        \SetData{\fa}{\fa \cup \f}\label{algl:init-work-formula}\\
        \SetData{\lambda}{\langle 0, 0, \ldots, 0 \rangle}\tcp*[f]{init fence upper bound}\label{alg:lambda}\\ 
        \While{true}{ \label{algl:main-loop}
            \SetData{\A}{\brac{\brac{\lnot o_{i, \myNext(i,\lambda)}} : i \in 1 \ldots m} }\tcp*[f]{enforce fence $\lambda$} \label{algl:assumptions}\\
            \SetData{(\m,\k)}{\satcall{\fa}{}{\A}}\\
            \While(\label{alg:15}){$\m\neq \emptyset$}{ \label{algl:start-inner-loop}
              \SetData{\mx}{\m(V)}\\\label{alg:14}
              \SetData{\inc}{\inc \setminus \brac{\mxx\in \inc: \mx\<pd>[1] \mxx} \cup \brac{\mx} }\\\label{alg:16}
              \SetData{\fa}{\fa \cup \brac{\bigvee\limits_{i = 1}^{m}{\lnot o_{i, f_i(\mx)}}}}\tcp*[f]{block region dominated by $\mx$}\\\label{algl:phi-updt}
              \SetData{(\m,\k)}{\satcall{\fa}{}{\A}}\\\label{alg:18}
 } 
{\eIf(\tcp*[f]{the fence was not the problem}\label{alg:9}){$\k = \emptyset$}{ \KwRet{$\inc$} \tcp*[f]{arg-front found}\label{algl:return}
            }{ \ForEach(\tcp*[f]{expand fence, as suggested by $\k$}){
                $\brac{\lnot o_{i, k}} \in \k$} {
                \SetData{\lambda_i}{k}\label{alg:19}
              } } } } \label{algl:main-loop-end}
\end{algorithm}


Algorithm~\ref{alg:moco-unsat} presents the pseudo-code for an exact
core-guided SAT algorithm for MOCO. Figure~\ref{fig:unsatSat}
illustrates an abstract execution of the algorithm.

Let $\pbrac{\f, V, \ofv}$ be a MOCO instance.  Recall that $\f$
denotes the set of \<pb> constraints, $V$ is the set of variables and
$\ofv$ denotes the list of $m$ objective functions.

First, the algorithm starts by building a working formula with the
problem constraints and an unary counter for each objective function
(lines~\ref{algl:cnf-enc}-\ref{algl:init-work-formula}).  Next, a
vector $\lambda$ of size $m$ is initialized with the lower bound of
each objective function (line~\ref{alg:lambda}), assumed to be $0$.

At each iteration of the main cycle the assumptions $\A$ are assembled
from order variables $\ov$, chosen with the value of $\lambda$ in mind
(line~\ref{algl:assumptions}). The call to
$\myNext(i,\lambda)$\footnote{May be replaced by $\lambda_i + 1$.}
returns the next smallest value belonging to the image of the
objective $i$.
Given the semantics of the order variables $\ov_{i,k}$
(Definition~\ref{def:unary-counter}), the tuple $\lambda$
\emph{fences} the search space, \ie, $\m$ satisfies $\a$ only if the
corresponding tuple $\mx$ satisfies $F(\mx) \<pd>[1]
\lambda$. If the \<sat> call (line~\ref{alg:14}) returns a solution
(\ie, $\m \neq \emptyset$),
$\mx$ is stored in and all dominated solutions are removed from
$I$ (line~\ref{alg:16}). Moreover, one can readily block all feasible
solutions dominated by
$\mx$ using a single clause (line~\ref{algl:phi-updt})\footnote{This
  technique had already been proposed when enumerating minimal
  models~\cite{DBLP:conf/cp/SohBTB17}.}.

Usually there are several feasible fenced solutions. This is so
because the algorithm may increase multiple entries of $\lambda$ at
once. In any case, the inner while loop
(lines~\ref{alg:15}-\ref{alg:18}) collects all such solutions.



When the working formula $\fa$ becomes unsatisfiable, the SAT solver
provides an unsatisfiable core $\k$. If $\k$ is empty, then the
unsatisfiability does not depend on the assumptions, \ie, it does not
depend on the bounds imposed on the objective functions.  At that
point, we can conclude that no more solutions exist that are both
satisfiable and not dominated by an element of $\inc$. As a result,
the algorithm can safely terminate (line~\ref{algl:return}).
Otherwise, the literals in $\k$ denote a subset of the fence walls
$\lambda_i$ that may be too restrictive, in the sense that unless we
increment them (line~\ref{alg:19}) no new non-dominated solutions can
be found.



\subsection{Algorithm Properties}

\begin{lemma}
  \label{prop:7}
  The \emph{img-front} $\ipf$ of $\inc \cup \fs(\fa)$(Definition~\ref{def:pareto-front}) is an invariant of the inner loop in lines~\ref{alg:15}-\ref{alg:18}.

\end{lemma}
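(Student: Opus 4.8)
The plan is to verify that one execution of the loop body (lines~\ref{alg:14}, \ref{alg:16} and~\ref{algl:phi-updt}; the query on line~\ref{alg:18} touches neither $\inc$ nor $\fa$) leaves the img-front of $\inc \cup \fs(\fa)$ unchanged, which gives the invariant. Write $\inc,\fa$ for the values at the top of a pass and $\inc',\fa'$ for the values afterwards, put $U=\inc\cup\fs(\fa)$ and $U'=\inc'\cup\fs(\fa')$, and write $\ofv(U)=\brac{\ofv(u):u\in U}\subseteq\Nb^n$ for the attained objective vectors. Because the img-front of a set equals the collection of $\<pd>[0]$-minimal (non-strictly-dominated) points of its image (Definition~\ref{def:pareto-front}), I would drop the decision space entirely and reduce the claim to: $\ofv(U)$ and $\ofv(U')$ have the same $\<pd>[0]$-minimal elements.

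Two facts anchor the argument. First, the model $\m$ read on line~\ref{alg:14} satisfies $\fa\cup\A$, so $\mx=\m(V)\in\fs(\fa)$; and line~\ref{alg:16} inserts $\mx$ into $\inc'$. Hence $\ofv(\mx)\in\ofv(U)\cap\ofv(U')$: the freshly discovered objective vector is present in both unions. Second, since line~\ref{alg:16} only deletes from $\inc$ before adding the point $\mx\in\fs(\fa)$, and $\fa'\supseteq\fa$ forces $\fs(\fa')\subseteq\fs(\fa)$, we get $U'\subseteq U$ and therefore $\ofv(U')\subseteq\ofv(U)$: a pass can only remove objective vectors, never create them.

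The crux is to show that every removed vector is strictly dominated by the retained $\ofv(\mx)$. On the incumbent side this is immediate: line~\ref{alg:16} discards exactly the $\mxx\in\inc$ with $\mx\<pd>[1]\mxx$. On the formula side, the clause appended on line~\ref{algl:phi-updt} is $\bigvee_{i=1}^{m}\lnot o_{i,f_i(\mx)}$, which a model falsifies iff it sets every $o_{i,f_i(\mx)}$ true; by the unary-counter semantics of Definition~\ref{def:unary-counter}, a feasible $\mxx$ with $f_i(\mxx)\ge f_i(\mx)$ for all $i$ (i.e. $\mx\<pd>[1]\mxx$) is forced to do so in every extension and is thus dropped from $\fs(\fa')$, whereas a point not weakly dominated by $\mx$ admits an extension that falsifies some $o_{j,f_j(\mx)}$ and is retained. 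So both containers shed only points with $\ofv(\mx)\<pd>[1]\ofv(\mxx)$, and since $\ofv(\mx)$ itself persists while a deleted vector $y$ differs from it, each such $y$ satisfies $\ofv(\mx)\<pd>[0]y$ with $\ofv(\mx)\in\ofv(U')$. It then remains to invoke the order-theoretic fact that deleting points strictly dominated by a retained point preserves the $\<pd>[0]$-minimal set: a minimal $y$ of $\ofv(U)$ is strictly dominated by nothing, hence not deleted, and stays minimal in $\ofv(U')$; conversely a minimal $y$ of $\ofv(U')$ stays minimal in the larger $\ofv(U)$, since a hypothetical dominator either already lies in $\ofv(U')$ (impossible) or is a deleted vector $y'$ with $\ofv(\mx)\<pd>[0]y'\<pd>[0]y$, forcing $\ofv(\mx)\<pd>[0]y$ by transitivity (again impossible). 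This matches the two img-fronts.

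I expect the real obstacle to be the feasible-space half of the crux. What the invariant genuinely needs is the \emph{non-over-removal} direction: a feasible point $\mxx$ that is not weakly dominated by $\mx$ must survive in $\fs(\fa')$, for otherwise its objective vector could silently drop out of $\ofv(U')$ and move the img-front. Establishing survival amounts to exhibiting, for such $\mxx$ with $f_j(\mxx)<f_j(\mx)$ at some coordinate $j$, a model of $\fa$ extending $\mxx$ that sets $o_{j,f_j(\mx)}$ false; this is exactly the place where Definition~\ref{def:unary-counter} must be used at full strength (the single implication it states is not evidently enough by itself), and it is the step I would write out in detail.
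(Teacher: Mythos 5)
Your proposal is correct and, in substance, it is the same argument the paper gives: analyze one pass of the loop body, show that line~\ref{alg:16} and line~\ref{algl:phi-updt} delete only points weakly dominated by $\mx$, note that $\ofv(\mx)$ itself persists because $\mx$ is inserted into $\inc$, and conclude that the set of minimal objective vectors is unchanged. Your explicit reduction to minimal elements of the image, and the transitivity argument for why deleting only strictly dominated vectors can neither create nor destroy minimal ones, are a more careful rendering of what the paper compresses into two sentences (solutions strictly dominated by $\mx$ cannot be mapped into $\ipf$; the remaining deleted solutions attain exactly $\ofv(\mx)$, which survives).

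The obstacle you single out at the end, however, is the real point of divergence, and you are right about it: the paper's proof does not address it at all. The paper asserts that line~\ref{algl:phi-updt} ``filters the implicit set $\fs(\fa)$'', tacitly assuming the blocking clause removes exactly the points dominated by $\mx$. Only one half of that claim---that every $\mxx$ with $\mx \<pd>[1] \mxx$ loses all its models---follows from Definition~\ref{def:unary-counter}; the other half, that nothing else is removed, does not. The definition is one-sided: a formula that, in addition to a genuine counter, contains the unit clauses $o_{i,f_i(\mx)}$ for every $i$ still satisfies Definition~\ref{def:unary-counter} verbatim (forcing the consequents of the implications true cannot violate them), and such units are consistent with the fence assumptions under which $\mx$ is found; yet with them the clause added at line~\ref{algl:phi-updt} becomes unsatisfiable, $\fs(\fa)$ is emptied wholesale, and the img-front of $\inc \cup \fs(\fa)$ does change. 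So the lemma implicitly relies on a faithfulness property of the encoding built at line~\ref{algl:cnf-enc}: every feasible $\mxx$ must admit an extension to the order variables, feasible for the counter clauses, in which $o_{i,k}$ is false whenever $f_i(\mxx) < k$. That property holds for the order/selection-delimiter encodings the paper actually uses, and it is preserved as blocking clauses accumulate (a point still in $\fs(\fa)$ is dominated by none of the previously blocked solutions, so its tight extension satisfies every blocking clause), which is what makes the per-iteration argument iterate. Writing that step out, as you propose, would make the proof strictly more complete than the paper's own.
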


\begin{proof}
  Consider some particular iteration of the internal
  loop. Line~\ref{alg:16} and line \ref{algl:phi-updt} remove all
  elements of $\inc \cup \fs(\fa)$ that are dominated by the feasible
  point $\mx$. Line~\ref{alg:16} filters the explicit set $\inc$,
  line~\ref{algl:phi-updt} filters the implicit set
  $\fs(\fa)$. Solutions that are strictly dominated by $\mx$ cannot be
  mapped into an element of $\ipf$. The other solutions $\mxx$ that
  are filtered out must attain the same objective vector attained by
  $\mx$, $\ofv(\mxx) = \ofv(\mx)$.  Because $\mx$ is also inserted at
  line~\ref{alg:16}, removing $\mxx$ will not disturb $\ipf$.
\end{proof}

\begin{lemma}\label{prop:8}
  At the start of each iteration of the external loop (lines~\ref{algl:main-loop}-\ref{algl:main-loop-end}),
  every solution in $\inc$ is optimal, and no two elements of $\inc$ attain the same objective vector.
\end{lemma}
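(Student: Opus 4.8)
The plan is to argue by induction on the iterations of the external loop (lines~\ref{algl:main-loop}--\ref{algl:main-loop-end}), observing first that $\inc$ is touched \emph{only} at line~\ref{alg:16} inside the inner loop: the fence update at line~\ref{alg:19} and the return at line~\ref{algl:return} leave $\inc$ unchanged, so the state of $\inc$ at the start of an external iteration equals its state at the end of the previous inner loop. Since $\inc$ is initialised to $\emptyset$, the base case is vacuous, and it suffices to show that both properties hold at the end of every execution of the inner loop.

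Before the case analysis I would record two monotonicity facts. (i) The fence never shrinks: line~\ref{alg:19} only raises a component $\lambda_i$ to the next attainable objective value above it, so if $\mx$ was returned under the fence in force when it was found, giving $\ofv(\mx)\<pd>[1]\lambda$ then, the same holds for every later, larger $\lambda$. (ii) The blocking clause added for a found point $\mx'$ at line~\ref{algl:phi-updt} is falsified exactly by the points $\mxx$ with $\mx'\<pd>[1]\mxx$, and clauses are never removed from $\fa$; hence once $\mx'$ has been found, no point dominated by $\mx'$ --- in particular no point with the same objective vector as $\mx'$ --- is ever again $\fa$-feasible.

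For the second claim, fact (ii) yields it almost immediately: once some point attaining a given objective vector is found, every later feasible model attains a different vector, so across the whole run each objective vector is produced by at most one found point; as $\inc$ is a subset of the found points, its elements pairwise differ in their objective vectors. For optimality, take any $\mx\in\inc$ at the end of the inner loop, which exits only when $\satcall{\fa}{}{\A}$ is unsatisfiable. Suppose $\mx$ were strictly dominated by some $\f$-feasible $\mxx$, i.e.\ $\mxx\<pd>[0]\mx$. By~(i), $\ofv(\mx)\<pd>[1]\lambda$, so $\ofv(\mxx)\<pd>[0]\ofv(\mx)\<pd>[1]\lambda$ and $\mxx$ also meets the current fence; being $\f$-feasible and fenced, $\mxx$ can fail to satisfy $\fa\cup\A$ only by falsifying a blocking clause, and unsatisfiability forces exactly this, so some found $\mx'$ has $\mx'\<pd>[1]\mxx$, whence $\mx'\<pd>[0]\mx$. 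A short timing split finishes the argument: if $\mx'$ was found before $\mx$, its blocking clause already made $\mx$ $\fa$-infeasible, so $\mx$ could never have been returned; if after, line~\ref{alg:16} removed $\mx$ when $\mx'$ entered, and (ii) bars it from returning. Both cases contradict $\mx\in\inc$.

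The step I expect to be the crux --- and the one I would spell out most carefully --- is the claim that an $\f$-feasible, fenced, unblocked point must satisfy $\fa\cup\A$, so that unsatisfiability of the final \<sat> call really does force $\mxx$ to be blocked. Definition~\ref{def:unary-counter} only states the implication $f_i\geq k\Rightarrow o_{i,k}$, whereas here I need the converse freedom, namely that the order variables can be set to mirror the true objective values, so that a fenced point can satisfy the assumptions $\{\lnot o_{i,\mathtt{next}(i,\lambda)}\}$ and an unblocked feasible point genuinely yields a model of $\fa$. I would therefore either invoke, or strengthen the counter specification to, the biconditional $o_{i,k}\Leftrightarrow f_i\geq k$ on feasible models; once that semantic bridge is fixed, the remaining bookkeeping (the non-decreasing fence and the timing case split) is routine.
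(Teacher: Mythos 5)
Your proposal is correct, and its skeleton is the same as the paper's: reduce the claim to the state of $\inc$ at the end of the inner loop, use monotonicity of the fence to place any incumbent inside the current fence, derive a contradiction from a strictly dominating feasible point via a timing case split on which point was found first, and get the distinct-objective-vector claim directly from the blocking clause at line~\ref{algl:phi-updt}. Two of your refinements, however, are genuine improvements over the paper's own argument and worth noting. First, the paper asserts that the optimal dominating solution $\mxx$ ``must be found before the inner loop finishes''; strictly speaking this can fail, since $\mxx$ may never be generated if another point with the same objective vector is found first and its blocking clause excludes $\mxx$. You avoid this inaccuracy by instead extracting, from the unsatisfiability of the final call $\satcall{\fa}{}{\A}$, a \emph{found} witness $\mx'$ whose blocking clause covers $\mxx$, and observing $\mx' \<pd>[1] \mxx$ together with $\mxx \<pd>[0] \mx$ gives $\mx' \<pd>[0] \mx$; the timing split then runs on $\mx'$ exactly as the paper runs it on $\mxx$. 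Second, you correctly isolate the step both proofs must lean on: that an $\f$-feasible, fenced, unblocked point extends to a model of $\fa\cup\A$. Definition~\ref{def:unary-counter} only guarantees $f_i(\mx)\geq k \implies \ov_{i,k}$, which permits degenerate counters that force order variables to true and would break the algorithm; the converse freedom (that $\ov_{i,k}$ can be false whenever $f_i(\mx)<k$) is silently assumed by the paper and made explicit by you. So your proof is the paper's proof done rigorously: what the paper buys with its looser phrasing is brevity, and what you buy is an argument that actually identifies the encoding property on which soundness of the whole algorithm rests.
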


\begin{proof}
  We prove this by contradiction. Assume that there is a non-optimal
  solution $\mx \in \inc$ at the start of the external loop (line~\ref{algl:main-loop}).
  In the first iteration, this does not occur because $\inc$ is empty.
  Hence, this can only occur if the inner loop (lines~\ref{alg:15}-\ref{alg:18})
  finishes with a non-optimal solution $\mx \in \inc$.
  
  The inner loop (lines~\ref{alg:15}-\ref{alg:18}) enumerates
  solutions inside the fence defined by $\lambda$. We know that
  $\ofv(\mx) \<pd>[1] \lambda$ because it is inside the fence and the
  values in $\lambda$ never decrease.  If $\mx$ is non-optimal, then
  there must be an optimal solution $\mxx$ such that
  $\ofv(\mxx) \<pd>[0] \ofv(\mx) (\<pd>[1] \lambda)$. Hence, $\mxx$ is
  also inside the fence. As a result, $\mxx$ must be found before the
  inner loop finishes, since at each iteration only dominated
  solutions are blocked (line~\ref{algl:phi-updt}).  If $\mx$ is found
  before $\mxx$, then $\mx$ is excluded from $\inc$
  (line~\ref{alg:16}) when $\mxx$ is found. Otherwise, if $\mxx$ is
  found first, then $\mx$ is not found by the SAT solver (blocked at
  line~\ref{algl:phi-updt}) because it is dominated by $\mxx$.
  Therefore, we cannot have a non-optimal solution $\mx \in \inc$ at
  the end of the inner loop or at the start of each iteration of the
  external loop (lines~\ref{algl:main-loop}-\ref{algl:main-loop-end}).
  



  Furthermore, no two elements of $\inc$ attain the same objective vector
  since when a solution $\mx$ is found, all other solutions $\mxx$ such
  that $\ofv(\mx)=\ofv(\mxx)$ are also blocked (line~\ref{algl:phi-updt}).
\end{proof}

Lemma~\ref{prop:8} establishes a weaker form of \emph{anytime
  optimality}. The elements of the incumbent list $\inc$ are not
necessarily optimal at anytime, but they are optimal immediately after
completing the inner loop. It is easy enough to make it anytime
optimal. This could be achieved if the algorithm refrains from adding
solutions directly to $\inc$ in the inner loop and maintain a
secondary list, where it stores the solutions that are still not
necessarily optimal. This list takes the role of $\inc$ inside the
inner loop. After completing the inner loop, all elements of the
secondary list are optimal, and can be safely transferred to the main
list $I$.

\begin{lemma}
  Algorithm~\ref{alg:moco-unsat} is sound. 
\end{lemma}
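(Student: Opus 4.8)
The plan is to establish soundness in the sense of partial correctness: \emph{if} Algorithm~\ref{alg:moco-unsat} halts, then the set it returns is an arg-front of $\fs(\f)$ in the sense of Definition~\ref{def:pareto-front}. The only exit is line~\ref{algl:return}, reached with the current incumbent list $\inc$ exactly when the oracle reports an empty core, $\k = \emptyset$. I therefore have to verify, under this condition, the three defining properties of an arg-front: (i) $\inc \subseteq \pf$, i.e. every incumbent is Pareto-optimal (and in particular $\f$-feasible); (ii) the restriction of $\ofv$ to $\inc$ is injective; and (iii) $\ofv(\inc) = \ipf$, i.e. the incumbents realise the whole img-front.

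Properties (i) and (ii) I would read off directly from Lemma~\ref{prop:8}. The test $\k = \emptyset$ is evaluated immediately after the inner loop closes, which is precisely the instant addressed by that lemma; its conclusion -- all incumbents optimal, no two sharing an objective vector -- is exactly (i) together with (ii). Feasibility of each incumbent is automatic, since incumbents are read off SAT models of $\fa$, which contains $\f$.

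The real content is (iii), and its coverage half is where I expect the difficulty. The idea is to turn the oracle's verdict into a covering statement. An empty core certifies that the unsatisfiability of $\fa$ does not use the fence assumptions $\A$, so $\fa$ is unsatisfiable on its own. Now $\fa$ is the conjunction of $\f$, the unary-counter definitions, and the blocking clauses posted at line~\ref{algl:phi-updt}. I would first show that the clause $\bigvee_i \lnot o_{i, f_i(\mx)}$ associated with a found point $\mx$ forbids exactly the $\f$-feasible points $\mxx$ with $\mx \<pd>[1] \mxx$: if $f_i(\mxx) \geq f_i(\mx)$ in every coordinate then the counter semantics (Definition~\ref{def:unary-counter}) force every $o_{i, f_i(\mx)}$ true and the clause fails, whereas if $f_j(\mxx) < f_j(\mx)$ in some coordinate $j$ the implication defining $o_{j, f_j(\mx)}$ is vacuous, so that variable may be set false and the clause is met. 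This one-directional reading of the counter is the delicate step, and the point I would take most care over. Granting it, unsatisfiability of $\fa$ means every $\f$-feasible point is dominated by some point ever found; and since every point discarded from $\inc$ at line~\ref{alg:16} is dominated by the point that replaced it, a short induction shows that every found point is in turn dominated by a current element of $\inc$. Hence every $\f$-feasible point is dominated by an element of $\inc$, i.e. $\inc$ covers $\fs(\f)$.

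Finally I would combine coverage with optimality to pin down the img-front. Take $y \in \ipf$, realised by an optimal $\mxxx$ with $\ofv(\mxxx) = y$. Coverage yields an $\mx \in \inc$ with $\mx \<pd>[1] \mxxx$; but $\mx$ and $\mxxx$ are both optimal, and optimal points cannot strictly dominate one another, so $\ofv(\mx) = \ofv(\mxxx) = y$ and thus $y \in \ofv(\inc)$. This gives $\ipf \subseteq \ofv(\inc)$; the reverse inclusion is immediate from (i), since every incumbent is optimal and hence maps into the img-front. With the injectivity (ii) already in hand, this establishes (iii) and shows that $\inc$ is an arg-front, completing the soundness argument.
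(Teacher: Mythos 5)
Your proof is correct, and it reaches the paper's conclusion by a genuinely different route at the key step. Like the paper, you extract optimality and injectivity of $\ofv$ on $\inc$ from Lemma~\ref{prop:8}, and you both begin from the observation that an empty core at line~\ref{alg:9} certifies that $\fa$ alone is unsatisfiable, i.e.\ $\fs(\fa) = \emptyset$. From there the paper argues via Lemma~\ref{prop:7}: the img-front of $\inc \cup \fs(\fa)$ is an invariant of the inner loop, so comparing the start of the run (where $\inc = \emptyset$ and the img-front of $\fs(\fa)$ equals that of $\fs(\f)$, the order variables being constrained only by the counters) with the moment of return (where $\fs(\fa) = \emptyset$) immediately gives that the img-front of $\inc$ equals the img-front of the instance, and Lemma~\ref{prop:8} finishes the job. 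You instead inline the content of Lemma~\ref{prop:7} in a different form: you show that the blocking clauses of line~\ref{algl:phi-updt} exclude exactly the points dominated by found solutions, deduce from the unsatisfiability of $\fa$ that every $\f$-feasible point is dominated by some point ever found, propagate this to the current incumbents by induction over the replacements at line~\ref{alg:16}, and then convert this coverage statement into img-front equality using the observation that two optimal points cannot strictly dominate one another. The invariant formulation buys modularity and brevity: the bookkeeping about what blocking and filtering preserve is proved once, and no separate coverage-to-image conversion is needed. Your formulation is more self-contained, and it has the merit of making explicit the one-directional reading of Definition~\ref{def:unary-counter}: as stated, the definition only forces order variables to be true when the objective value is reached, so the claim that $o_{j,f_j(\mx)}$ \emph{may} be set false when $f_j(\mxx) < f_j(\mx)$ is an additional property the encoding must supply. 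You rightly flag this as the delicate point; note that the paper's proof of Lemma~\ref{prop:7} relies on exactly the same implicit assumption (it asserts that line~\ref{algl:phi-updt} removes only dominated elements of $\fs(\fa)$), so your argument is no less rigorous than the paper's --- it merely surfaces an assumption the paper leaves tacit.
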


\begin{proof}
  If the algorithm returns, $\fs(\fa\land\A) = \emptyset$. Because
  $\k$ is empty, no core of the unsatisfiable formula $\fa\land\A$
  intersects $\A$, and $\fa$ is also unsatisfiable,
  $\fs(\fa) = \emptyset$. Using Lemma~\ref{prop:7} both at the end and
  at the start of the course of the algorithm, the img-front of $I$ is
  the img-front of $\fs(\fa)$, with $\fa$ given by
  line~\ref{algl:init-work-formula}. Because the order variables are
  only restricted by the unitary counter formula, the img-front of
  $\fs(\fa)$ is the img-front of $\fs(\f)$. Therefore $\inc$ must
  contain an arg-front of the problem. Using Lemma~\ref{prop:8}, every
  element of $\inc$ is optimal, and there is no pair
  $\mx,\mxx\in \inc$ such that $\ofv(\mx)=\ofv(\mxx)$. Therefore,
  $\inc$ is an arg-front of the \moco++ instance.
\end{proof}

\begin{lemma}
  Algorithm~\ref{alg:moco-unsat} is complete.
\end{lemma}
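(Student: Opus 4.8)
The plan is to prove completeness by establishing that Algorithm~\ref{alg:moco-unsat} always terminates: its only exit is the return at line~\ref{algl:return}, so termination amounts to showing that this line is eventually reached, and combined with soundness this yields full correctness. I would bound the two nested loops separately, using finiteness of the feasible space for the inner loop and a strictly increasing, bounded measure on the fence $\lambda$ for the outer loop.

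First I would argue that the inner loop (lines~\ref{alg:15}--\ref{alg:18}) executes finitely often on each pass. The feasible space $\fs(\f) \subseteq \PBool^n$ is finite, hence so is the set of attainable objective vectors. Throughout a single pass the assumptions $\A$ are fixed, and every iteration finds a feasible $\mx$ and, via the clause added at line~\ref{algl:phi-updt}, permanently forbids each solution $\mxx$ with $\ofv(\mx) \<pd>[1] \ofv(\mxx)$ — in particular every solution attaining the vector $\ofv(\mx)$. Thus the number of attainable, still-unblocked objective vectors strictly decreases at each iteration, so the inner loop must end with $\m = \emptyset$ and some core $\k$.

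Next I would bound the number of outer iterations (lines~\ref{algl:main-loop}--\ref{algl:main-loop-end}). Let $M_i$ be the maximum value of $f_i$ over $\PBool^n$, which is finite. Whenever an outer iteration does not return, it reaches line~\ref{alg:19}; since every literal $\brac{\lnot o_{i,k}} \in \k$ originates from the assumptions assembled at line~\ref{algl:assumptions}, necessarily $k = \myNext(i,\lambda) > \lambda_i$, so the update $\lambda_i \leftarrow k$ strictly increases that component and decreases none. Provided $\lambda$ remains inside the finite box $\prod_{i=1}^{m}\{0,\ldots,M_i\}$, a strictly increasing chain there has finite length, so only finitely many non-returning iterations are possible and the algorithm must eventually take the returning branch.

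The step I expect to be the main obstacle is making this last proviso airtight, i.e. ruling out that the fence is pushed upward forever on an already-saturated objective. I would argue from the counter semantics (Definition~\ref{def:unary-counter}) that once $\lambda_i$ reaches the top of the range of $f_i$, the assumption $\lnot o_{i,\myNext(i,\lambda)}$ forbids only $f_i(\mx) \geq \myNext(i,\lambda)$, a condition no feasible $\mx$ meets; such a wall excludes no feasible solution and hence cannot lie in any core certifying infeasibility inside the fence, so line~\ref{alg:19} never increments it and $\lambda$ stays in the box. Equivalently, once $\lambda$ dominates every feasible objective vector, $\A$ excludes nothing feasible, so the inner loop (which exits only when $\fa \wedge \A$ is unsatisfiable) must have blocked every feasible solution; then $\fa$ is unsatisfiable on its own, the oracle returns a core disjoint from $\A$, i.e. $\k = \emptyset$, and the algorithm returns. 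Pinning down this boundary behaviour of $\myNext$ — including the footnoted variant $\lambda_i + 1$, for which the extra order variables are vacuously false and never appear in a minimal core — is the delicate part; the remaining finiteness arguments are routine.
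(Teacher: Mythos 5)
Your overall decomposition is exactly the paper's: the inner loop (lines~\ref{alg:15}--\ref{alg:18}) terminates because blocking clauses strictly shrink a finite set, and the outer loop (lines~\ref{algl:main-loop}--\ref{algl:main-loop-end}) terminates because every non-returning iteration strictly increases some component of $\lambda$, which ranges over a finite set. Those two parts of your argument are sound. The gap is in how you resolve what you yourself flag as the delicate part, the behaviour at saturation. You argue semantically: a wall $\lnot o_{i,k}$ with $k$ above the range of $f_i$ ``excludes no feasible solution and hence cannot lie in any core'', so line~\ref{alg:19} never fires for it and $\lambda$ stays in the box. This fails on two counts under the paper's own definitions. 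First, Definitions~\ref{def:sat-solver} and~\ref{def:unsatCore} do not promise minimal cores: a core is \emph{any} unsatisfiable subset of the working formula, and the returned core of assumptions is merely a subset of $\A$ contained in \emph{some} core, so it may contain assumptions irrelevant to the conflict; ``harmless, hence never in a core'' has nothing to stand on. Second, the counter semantics, Equation~\eqref{eq:counter_semantics}, is only the one-way implication $f_{i}(\mx) \geq k \implies \ov_{i,k}$; nothing in Definition~\ref{def:unary-counter} guarantees that $\ov_{i,k}$ \emph{can} be made false in models where $f_i(\mx) < k$, so a saturated wall can genuinely participate in unsatisfiability. The same two objections undermine your ``equivalently'' variant: when $\fa$ alone is unsatisfiable but $\A \neq \emptyset$, the interface does not force the oracle to return $\k = \emptyset$.

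The paper closes this hole syntactically rather than semantically. By Definition~\ref{def:unary-counter}, order variables $\ov_{i,k}$ exist only for values $k$ in the image of $f_i$; hence once $\lambda_i$ reaches the top of that image, $\mathrm{next}(i,\lambda)$ retrieves no variable at all, and coordinate $i$ simply contributes no literal to $\A$ at line~\ref{algl:assumptions} from then on (monotonicity of $\lambda_i$ keeps it that way). If the algorithm never returned, every coordinate would eventually saturate in this way, so $\A$ would become empty; then $\k \subseteq \A$ forces $\k = \emptyset$ \emph{regardless} of how the solver chooses its cores, and line~\ref{algl:return} is reached --- a contradiction. This argument needs neither core minimality nor any converse of the counter semantics. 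To repair your proof, replace the ``harmless wall'' reasoning (including its treatment of the footnoted $\lambda_i + 1$ variant, where the extra variables likewise do not exist) with this vanishing-assumptions argument; the rest of what you wrote then goes through.
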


\begin{proof}
  The inner loop will always come to fruition, because in the worst
  case it will generate every feasible solution dominated by the
  current $\lambda$ once, and the feasible space is finite.

  If the algorithm does not return for some particular instance, then
  $\k$ is never empty. In that case, every iteration of the external
  loop starting at line~\ref{algl:main-loop} will increase at least
  one of the entries of $\lambda$. Eventually, one entry $i$ must
  achieve the upper limit of $f_i$, and the order variable retrieved
  by $\ov_{i,\lambda_i + 1}$ will not exist. Because the evolution of
  $\lambda_i$ is monotonous, the assumptions will contain at most
  $m - 1$ variables, from that point on. By the same token, the
  assumptions $\A$ will eventually be empty, and so must be
  $\k\ \subseteq \A$, contradicting the assumption that the algorithm
  never terminates.
\end{proof}

\section{Hitting Set-based Algorithm}
\label{sec:hitting-set}
In this section we propose a \gls{moco} solver based on the enumeration
of hitting sets. 
We briefly motivate the algorithm, and prove its correctness and soundness.

The main idea is to compute a sequence of relaxations $\rf$ of the
formula $\f$, and solve the corresponding problems. The front $\rapf$
of the relaxed problem gets incrementally closer to the desired front
$\apf$, and will eventually reach it in a finite amount of time.

\subsection{Algorithm Description}

\begin{algorithm}[t]
  \caption{Hitting-Sets based MOCO solver \HS*}
  \label{alg:hitting-sets-x}
  \mocoInput
  \mocoOutput
  \SetData{\rf}{\emptyset}\tcp*[f]{relaxed formula $\rf$ is initially empty}\\\label{alg:3}
  \While(\label{alg:2}){true}{
    \SetData{\diagno}{\emptyset}\\\label{alg:1}
    \SetData{\rapf}{\afront_\rf \ofv}\tcp*[f]{use auxiliar solver}\\\label{alg:28}
    \ForEach(\tcp*[f]{diagnose $\f$-infeasibility of
      $\rapf$}\label{alg:6}){$\mx \in \rapf$}{
      \SetData{\A_{\mx}}{\brac{\brac{l}, l \in \m: \m(V) = \mx}}\\\label{alg:10}
      \SetData{(\bullet, \k)}{\satcall{\f}{}{\A_{\mx}}}\\\label{alg:30}
      \If{$\k \neq \emptyset$}{
        \SetData{\diagno}{\k \cup \diagno}\\\label{alg:8}
      }
    }
    \If(\tcp*[f]{if $\rapf$ is $\f$-feasible}\label{alg:7}){$\diagno = \emptyset$}{
      \KwRet{$\rapf$}\tcp*[f]{arg-front found}\\}\label{alg:4}
    \ForEach{$\k \in \diagno$}{
      \SetData{\rf}{\rf \cup \brac{\lnot l, \brac{l} \in
          \k}}\tcp*[f]{Increment $\rf$, adding $\lnot \k$}\\\label{alg:5}
    }
  }
\end{algorithm}

\def\scale{.9}
\def\visible<#1>{}
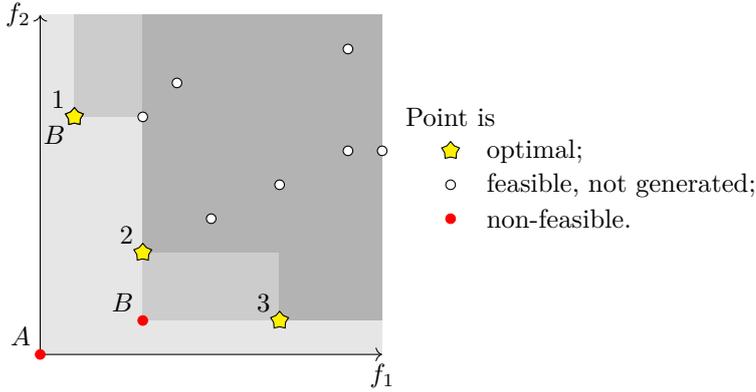
\begin{figure}[t]
  {
    \begin{tikzpicture}[scale=\scale]
      \minMax{}
      \axes{$f_1$}{$f_2$}
      \pgfmathsetseed{16}
      \node[below right,white] at (min) {\pgfcurrentseed};
      \path  (min -| max) ++(0,2) to coordinate (legend) (max);
      \path (legend) ++(1,0) coordinate (legend);
      \node[] at (legend) {Point is};      
      \path (legend) ++(0,-.5) coordinate (legend);
      \node[myStar] at (legend) {};
      \node[anchor=west,xshift=10pt] at (legend) {optimal;};
      \path (legend) ++(0,-.5) coordinate (legend);
      \node[anchor=west,xshift=10pt] at (legend) {feasible, not generated;};
      \draw[dot] (legend) circle[];
      \path (legend) ++(0,-.5) coordinate (legend);
      \node[anchor=west,xshift=10pt] at (legend) {non-feasible.};
      \draw[fill,dot,red] (legend) circle[];
      \foreach \c in {9,8,4,5,6,7,10}{
        \randx \randy
        \path
        (\xpoint, \ypoint)
        coordinate (p\c) node[above left] {};
        \draw[dot] (p\c)  circle[];
      }
      \foreach \c in {3,1,2}{
        \randx \randy
        \path
        (\xpoint, \ypoint)
        coordinate (p\c) node[above left] {\c};
        \draw[dot] (p\c)  circle[];
      }
      \path[scaled] (min) coordinate (lb1);
      \node[anchor=south east] at (lb1) {$A$};
      \visible<2-3>{\dominatorOpaque{lb1}{black!10!white};}
      \visible<3->{\draw[fill,dot,red] (lb1) circle[] ;}

      \visible<6->{ \path[scaled] (3,1) coordinate (lb1) (p1) coordinate (lb2);}
      \visible<6-8>{\dominatorOpaque{lb1}{{black!20!white}};}
      \visible<6>{\dominatorOpaque{lb2}{{black!20!white}};}
      \visible<7->{ 
        \node[myStar] at  (p1) {} ;}
      \visible<8->{\draw[fill,dot,red] (lb1) circle[] ;}      
      \node[myStar] at  (lb2) {};
      \node[anchor=south east] at (lb1) {$B$};
      \node[anchor=north east] at (lb2) {$B$};
      \visible<9->{ \path[scaled] (p2) coordinate (lb1) (p3) coordinate (lb3);}
      \visible<9>{\dominatorOpaque{lb1}{black!30!white};}
      \visible<9>{\dominatorOpaque{lb3}{black!30!white};}

      \visible<10->{ 
        \node[myStar] at  (lb1) {};
        \node[myStar] at  (lb3) {};}
    \end{tikzpicture}
  }
  \caption{ Illustration of a run of the
    \HS* (Algorithm~\ref{alg:hitting-sets-x}) in the
    objective space. The Pareto front is the set $\brac{1,2,3}$, and
    the feasible solutions are marked by \tikz{\draw[dot]
      circle[];}. For each iteration of the main while cycle at
    line~\ref{alg:2} we get a narrower lower bound
    $\rapf$ (line~\ref{alg:28}), culminating in the solution.  We are
    done in 3 iterations, marked by $A$, $B$ and \tikz{\node[myStar]
      {};}. The shading represents the number of iterations whose
    freshly found points dominate the region. The lighter tone was
    painted by $A$, the darker one by all three.  We start with the
    empty formula (line~\ref{alg:3}), and we get $A$. Because the only
    point in $A$ is not feasible, we tighten the
    relaxation (line~\ref{alg:5}). Iteration $B$ generates one feasible
    point, $1$, which is therefore optimal. Note that the region
    dominated by $1$ can be pruned from now on. The other point is used to
    tighten the formula once more. Lastly, the lower bound contains
    the feasible points $2$ and $3$ in addition to $1$ which was
    already found, and the algorithm stops.}\label{fig:hittingSets}
\end{figure}

Algorithm~\ref{alg:hitting-sets-x} contains the pseudo-code for our
hitting set-based algorithm for MOCO. Figure~\ref{fig:hittingSets}
illustrates an abstract execution of the algorithm.

The algorithm starts by setting the \emph{relaxed formula} $\rf$ to
the empty one (line~\ref{alg:3}).
The main cycle that starts at line~\ref{alg:2} will hone the relaxation
until we get the desired result.
At each iteration, we solve the current relaxed formula $\rf$ at
line~\ref{alg:28}. This will be accomplished by using some \<moco> solver.
Because this amounts to computing a lower bound set, the \US* algorithm,
previously described, is a good choice for the task.
We anticipate that it performs well for problems whose front is
in the vicinity of the origin, given that by construction the focus of
its search is biased to that region.  Notice that the first
relaxation's arg-front is the set that contains the origin only
(assuming all literals in the objective functions are positive). We
expect that the first few relaxations will stay close to it.

Next, for each element $\mx$ in $\rapf$ (the Pareto-front of $\rf$),
we check the $\f$-feasibility of $\m: \m(V) = \mx$, using the
assumptions mechanism, and returns a (possibly empty) core of
assumptions $\k$.  The assumptions $\A_{\mx}$ built at
line~\ref{alg:10} are a set of unitary clauses whose polarity is
inherited from $\m$,
\begin{equation} \label{eq:28}
  \begin{gathered}
    \m(x_i)  \implies x_i \in \A, \quad \lnot \m(x_i)  \implies \lnot x_i \in \A.
  \end{gathered}
\end{equation}

Given the interface of our \<sat> solver
(Definition~\ref{def:sat-solver}), the returned core $\k$ will be void
iff $\A_{\mx} \land \f$ is satisfiable.  In this case, $\mx$
corresponds to an optimal solution.

The \emph{diagnosis} $\diagno$ is central for the algorithm.
Intuitively, it reports if and why the relaxed problem's solution is
different from the true Pareto solution.
We add every non-empty $\k$ to the diagnosis $\diagno$ (line~\ref{alg:8}).
In the end, $\diagno$ is empty iff every element of the relaxed front $\rapf$
is $\f$-feasible. At that point, we have found a $\f$-feasible lower
bound set. All such sets are arg-fronts, and so the algorithm terminates
at line~\ref{alg:4}.
Otherwise, if $\diagno$ is not empty, then the found cores are
added to the relaxed formula $\rf$ (line~\ref{alg:5}).
This step ensures all tentative solutions found in line~\ref{alg:28}
hit the unsatisfiable cores found and that the algorithm advances in a
monotonous fashion towards the solution.
This will be further discussed in Lemma~\ref{prop:11}.

\subsection{Algorithm Properties}

Given a MOCO instance $\pbrac{\f,V,\ofv}$, the formula $\f$ encodes
the feasible space $\fs$ implicitly, which in turn defines the desired
front $\apf$. This is a many to one correspondence, in the sense that there
are many different values of $\ff$ that encode the same Pareto
front. It may happen that some of the counterpart instances are easier
to solve than the original one, which begs the question: given $\f$,
can we effectively find a simpler formula $\ff$ with the same Pareto
front?  This is the motto of the proposed algorithm. It is done by
iteratively honing a \emph{relaxed formula}
(Definition~\ref{def:rcnf}).

The main idea is to compute a sequence of relaxations that get
incrementally tighter. In that case, the corresponding front $\rapf$
gets incrementally closer to the desired front $\apf$,
\begin{align}
  \f && \implies && \rffn && \implies && \ldots  && \implies && \rffi,\\
  \apf &&  \<pd>[3] && \rapffn&&  \<pd>[3] && \ldots&&  \<pd>[3] && \rapffi,\label{eq:chain-lower-bounds}
\end{align}
where $\apf$ is one of the desired arg-fronts, and $\rapffx$ is an
arg-front of $\rffx$. 

\begin{lemma}
  \label{prop:super-feasible-space}
  Consider some multi-objective function $\ofv : \ds \rightarrow
  \os$. Let $\fs, \fss$ be subsets of $\ds$, such that
  $\fss \subseteq \fs$. Then, any arg-front of $\fss$ is a lower bound
  set of any arg-front of $\fs$ (Definition~\ref{def:bound-sets}),
  \ie,
    $\fss \subseteq \fs \implies \<apf>[\fss] \<pd>[1] \<apf>[\fs]$.
\end{lemma}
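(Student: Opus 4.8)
The plan is to reduce the claim to the two defining conditions of a lower bound set (Definition~\ref{def:bound-sets}) and verify each directly. Concretely, since $\fss\subseteq\fs$ makes $\fs$ the larger feasible set, I would show that its arg-front $\<apf>[\fs]$ \emph{covers} (Definition~\ref{def:coverage-set}) and is \emph{non-inferior to} (Definition~\ref{def:non-inferior-set}) the arg-front $\<apf>[\fss]$ of the smaller set; this is exactly the set-valued form of the remark preceding the lemma, namely that enlarging the feasible space can only push the front downwards, so the front of the larger set lower-bounds that of the smaller one. Both predicates depend only on objective vectors, so as a first move I would pass to img-fronts and argue on images under $\ofv$: the one-to-one arg-front selection is then immaterial, since every objective vector on a Pareto front is realised by exactly one chosen representative.

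For non-inferiority --- the easy half --- I take $\mx\in\<apf>[\fs]$ and $\mxx\in\<apf>[\fss]$. As $\<apf>[\fss]\subseteq\fss\subseteq\fs$, the point $\mxx$ already lies in $\fs$, so if it strictly dominated $\mx$ (i.e.\ $\mx\<pd>[2]\mxx$) it would contradict $\mx$ being non-dominated in $\fs$ (Definition~\ref{def:pareto-front}); hence no element of $\<apf>[\fss]$ strictly dominates one of $\<apf>[\fs]$. For coverage I fix $\mxx\in\<apf>[\fss]$, note $\mxx\in\fs$, and must exhibit a dominating element of $\<apf>[\fs]$. If $\mxx$ is still non-dominated in $\fs$ it lies on the Pareto front of $\fs$, so a representative with the same objective vector sits in $\<apf>[\fs]$ and dominates it. Otherwise $\mxx$ is strictly dominated in $\fs$, and I climb down the strict-dominance order $\<pd>[0]$ to a non-dominated point; since $\ds\subseteq\PBool^n$ is finite, $\<pd>[0]$ admits no infinite descending chain, so the descent terminates at some Pareto-optimal $\mx\in\fs$ with $\ofv(\mx)\<pd>[0]\ofv(\mxx)$, whose representative in $\<apf>[\fs]$ then dominates $\mxx$. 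Combining the two halves yields the lower-bound-set relation.

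The main obstacle is the coverage step. Unlike non-inferiority, it cannot be settled by merely relocating a point across the inclusion $\fss\subseteq\fs$, because a point that is Pareto-optimal in $\fss$ may become strictly dominated once the extra points of $\fs$ are available, and what is required is a dominating element that genuinely lies on the front of $\fs$ (so that it has a representative in $\<apf>[\fs]$), not just some dominating feasible point. I would therefore justify the well-foundedness of $\<pd>[0]$ explicitly --- finiteness of $\fs$ suffices --- since the entire coverage argument rests on being able to descend to an actual front element rather than stopping at an arbitrary feasible dominator.
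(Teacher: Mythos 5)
Your argument is mathematically sound, and it is in fact more of a proof than the paper itself supplies: the paper never verifies Lemma~\ref{prop:super-feasible-space} formally, it only records the remark that optimizing over a superset of a feasible space always returns a (non-strict) lower bound set. Your reduction to the two defining conditions of Definition~\ref{def:bound-sets} is the natural formalization of that remark. The non-inferiority half is the one-line relocation argument you give, and you are right that coverage is the half needing real work, precisely because a point that is Pareto-optimal in $\fss$ may be strictly dominated in $\fs$, so one must descend to a genuine element of the front of $\fs$ before taking its representative in $\<apf>[\fs]$. Two small polish items: the well-foundedness of $\<pd>[0]$ does not require $\ds \subseteq \PBool^n$ to be finite, since $\os \subseteq \Nb^n$ and every strict-domination step strictly decreases the coordinate sum, so the descent terminates even over an infinite decision space; and when you pass from a front element to its arg-front representative with the same objective vector, the relation obtained is the non-strict $\<pd>[1]$, which is exactly what coverage asks for (Definition~\ref{def:coverage-set}).

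The one thing you should make explicit rather than do silently: what you prove is $\<apf>[\fs] \<pd>[1] \<apf>[\fss]$ under the hypothesis $\fss \subseteq \fs$, i.e., the arg-front of the \emph{larger} set lower-bounds the arg-front of the smaller one. This is the \emph{reverse} of the lemma's displayed implication $\<apf>[\fss] \<pd>[1] \<apf>[\fs]$, and that displayed form is false as literally written: take $\fs = \brac{\mx, \mxx}$ with $\ofv(\mx) \<pd>[0] \ofv(\mxx)$ and $\fss = \brac{\mxx}$; then $\<apf>[\fss] = \brac{\mxx}$ neither covers nor is non-inferior to $\<apf>[\fs] = \brac{\mx}$. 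The version you proved is the intended one: it matches the paper's own prose justification, and it is the direction actually invoked in the soundness proof of Algorithm~\ref{alg:hitting-sets-x}, where the relaxed feasible space is the \emph{superset} and its arg-front serves as the lower bound set of the original arg-front. In other words, the lemma's hypothesis should read $\fs \subseteq \fss$ to be consistent with its displayed conclusion; your proof effectively makes this correction, but it should be flagged explicitly so the reader does not think you proved the converse of the stated claim.
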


Lemma~\ref{prop:super-feasible-space} is true because optimizing
over a superset of some feasible space always returns a (non-strict)
lower bound set. In a sense, the optimization can only be more extreme
when applied to the superset. In particular, the feasible space of a
relaxed formula is a superset of the original one. This is why the
chain of $\<pd>[1]$ relations in Equation~\eqref{eq:chain-lower-bounds}
is correct.

\begin{lemma}\label{prop:2}
  Let $\f$ be a formula, $\fs \subseteq \ds$ be its feasible space and
  $\ofv : \ds \rightarrow \os$ be some multi-objective function. Let
   $L$ be a lower bound set of the Pareto front of $\fs$. Then, any
   element $x \in L$ that is feasible belongs to the Pareto front,
     $L \cap \fs \subseteq \pf$.
   If all elements $\mx \in L$ are feasible, then $L$ is an arg-front.
\end{lemma}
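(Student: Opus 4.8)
The plan is to establish the two claims in sequence, the second resting on the first. I will use throughout that $\fs$ is finite, so every feasible point is weakly dominated by some Pareto-optimal point: a $\<pd>[1]$-minimal element of $\brac{\mxx \in \fs : \mxx \<pd>[1] \mx}$ lies in $\pf$ and dominates $\mx$.

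For the inclusion $L \cap \fs \subseteq \pf$, I would argue by contradiction. Take $\mx \in L \cap \fs$ and suppose $\mx \notin \pf$; then some feasible point strictly dominates $\mx$, and by the observation above there is a front point $\mxxx \in \pf$ with $\mxxx \<pd>[0] \mx$ (here I use the routine fact that a weak dominance composed with a strict one is strict, seen on the component that witnesses strictness). This front point strictly dominating $\mx \in L$ contradicts the non-inferiority of $L$ to $\pf$ (Definition~\ref{def:non-inferior-set}), which forbids any front element from strictly dominating an element of $L$. Hence $\mx \in \pf$.

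For the second statement, suppose $L \subseteq \fs$. The first part gives $L = L \cap \fs \subseteq \pf$, whence $\ofv(L) \subseteq \ipf$. The substantive step is the reverse inclusion, which I would draw from coverage: for any $\o = \ofv(\mxx) \in \ipf$ with $\mxx \in \pf$, coverage of $\pf$ by $L$ yields some $\mx \in L$ with $\mx \<pd>[1] \mxx$; were the two images distinct this would give $\mx \<pd>[0] \mxx$, a feasible point strictly dominating the front point $\mxx$, which is impossible, so $\ofv(\mx) = \o$. Thus $\ofv(L) = \ipf$ while $L \subseteq \pf$, i.e. $L$ realises precisely the img-front.

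The genuine obstacle is the \emph{one-to-one} clause in the definition of arg-front (Definition~\ref{def:pareto-front}): I must rule out two distinct elements of $L$ sharing an objective vector, which neither coverage nor non-inferiority provides on its own, and I expect this to be the delicate point. Under the natural convention that a lower bound set carries pairwise distinct objective vectors, the restriction of $\ofv$ to $L$ is then a bijection onto $\ipf$ and $L$ is an arg-front; absent that convention, the conclusion is best read as ``$L$ contains an arg-front'', recovered by discarding duplicate-image elements, after which the preceding argument applies verbatim.
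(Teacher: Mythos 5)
Your proof is correct, and there is in fact nothing in the paper to compare it against: the paper states this lemma bare, with only an interpretive remark after it, so your argument supplies a proof the paper omits. Both halves are sound. For $L \cap \fs \subseteq \pf$ you reduce to the existence of a Pareto-optimal \emph{strict} dominator of a non-optimal feasible point (finiteness of $\fs \subseteq \PBool^n$ gives a minimal feasible weak dominator, which lies in $\pf$, and weak dominance composed with strict dominance is strict), and this contradicts non-inferiority of $L$ to $\pf$ exactly as Definition~\ref{def:non-inferior-set} requires. For the second half, coverage plus the definition of $\pf$ correctly forces every vector of $\ipf$ to be attained by some element of $L$: a covering element $\mx \in L \subseteq \fs$ with $\ofv(\mx) \neq \ofv(\mxx)$ would be a feasible point strictly dominating the front point $\mxx$, which is impossible.

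The injectivity caveat you raise is genuine, and it is a defect of the lemma as stated rather than of your proof. Concretely, take $\fs = \brac{\mx, \mxx}$ with $\mx \neq \mxx$ but $\ofv(\mx) = \ofv(\mxx)$; then $L = \brac{\mx, \mxx}$ covers and is non-inferior to $\pf = \brac{\mx, \mxx}$ (no strict dominance occurs anywhere) and consists only of feasible points, yet $L$ maps two-to-one onto $\ipf$, so it is not an arg-front in the sense of Definition~\ref{def:pareto-front}. The paper's application of the lemma is nevertheless safe, for precisely the reason you anticipate: in the soundness proposition, $L$ is $\afront_{\rf}\ofv$, an arg-front of the relaxed formula, so distinct elements of $L$ carry distinct objective vectors by construction, which is your ``natural convention''; alternatively, your weakened conclusion that $L$ \emph{contains} an arg-front is all that soundness of Algorithm~\ref{alg:hitting-sets-x} actually needs. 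It would be worth adding this hypothesis (or the weakened conclusion) to the lemma statement.
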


Lemma~\ref{prop:2} implies that every lower bound set with only
feasible elements must be itself an arg-front (this is an exact
analogy with the single-objective case, where {\em lower bound set} is
replaced by {\em infimum} and {\em arg-front} by {\em arg-min}.)
By construction of the diagnosis $\diagno$, this is equivalent to the
condition used in Algorithm~\ref{alg:hitting-sets-x} to decide if
it can terminate.

To ensure the sequence gets to $\apf$ in a finite number of steps, we
need more than a string of relaxations. Each entry $\rff$ must be
strictly tighter than the predecessor $\rf$.

\begin{lemma}\label{prop:11}
  Consider Algorithm~\ref{alg:hitting-sets-x}. Let $\rf$ be the
  relaxed formula at some iteration, and $\rff$ be the relaxed formula
  at the next iteration. Then, 
\begin{enumerate}
\item  $\rf$ relaxes $\rff$, \ie, $\rff \ent \rf$;
\item Both $\rf$ and $\rff$ relax $\f$, \ie, $\f \ent \rf, \f \ent \rff$;
\item $\rff$ does not relax $\rf$, \ie, $\rf \not\ent \rff$;
\end{enumerate}
 \end{lemma}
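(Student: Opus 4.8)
The plan is to read the three claims as increasingly informative statements about the single line~\ref{alg:5} update that turns $\rf$ into $\rff$, and to dispatch them in the order (1), (2), (3), since each reuses the structural facts set up before it. Throughout I will write $\rff = \rf \cup C$, where $C$ is the set of blocking clauses $\brac{\lnot l : \brac{l}\in\k}$ added over all $\k\in\diagno$ at line~\ref{alg:5}, and I will unfold ``$\rf$ relaxes $\rff$'' via Definition~\ref{def:rcnf} to the entailment $\rff\ent\rf$ (and similarly for the other two).

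Claim (1) I would settle by pure monotonicity: the clause set of $\rff$ contains that of $\rf$, so every model of $\rff$ satisfies every clause of $\rf$, giving $\rff \ent \rf$; no computation is involved. For claim (2) I would run an induction over the iterations of the main loop with invariant $\f \ent \rf$. The base case is line~\ref{alg:3}, where $\rf = \emptyset$ is entailed by any formula. For the step it suffices to show each fresh clause $c = \brac{\lnot l : \brac{l}\in\k}$ satisfies $\f \ent c$. Here the SAT-solver interface (Definition~\ref{def:sat-solver}) and Definition~\ref{def:unsatCore} are the engine: the returned $\k$ is the assumption part of some core $\mu \subseteq \f\cup\A_{\mx}$ of the working formula, so splitting $\mu = \mu_\f \cup \k$ with $\mu_\f \subseteq \f$ and using $\f \ent \mu_\f$ yields $\f \cup \k \ent \mu \ent \false$, i.e. $\f \ent \lnot\k$. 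Since $\k$ consists of unit clauses, $\lnot\k$ is precisely the disjunctive clause $c$, so $\f \ent c$; as $\f$ entails every clause of $\rff$, we get $\f \ent \rff$, and $\f \ent \rf$ then follows from $\rf \subseteq \rff$.

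Claim (3) is the substantive one, and I expect it to be the main obstacle, because it asserts the relaxation strictly tightens rather than idling. The fact that a next iteration exists already means the algorithm did not return at line~\ref{alg:4}, so $\diagno \neq \emptyset$ and at least one nonempty core $\k$ was collected from some $\mx \in \rapf$. I would exhibit a single separating model: take the model $\m$ with $\m(V) = \mx$ used at line~\ref{alg:10}. Since $\mx \in \rapf = \afront_\rf \ofv$ it is a feasible point of $\rf$, so $\m \models \rf$. Yet the added clause $c = \brac{\lnot l : \brac{l}\in\k}$ is falsified by $\m$, because every literal $l$ with $\brac{l}\in\k\subseteq\A_{\mx}$ is true in $\m$ by construction of $\A_{\mx}$, hence every $\lnot l$ is false and $c$ is unsatisfied. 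Thus $\m \models \rf$ but $\m \not\models \rff$, giving $\rf \not\ent \rff$.

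The two delicate points where I would spend care are: (i) justifying $\f \cup \k \ent \false$ from the ``subset contained in some core'' phrasing of Definition~\ref{def:sat-solver}, which is exactly the step $\f \ent \mu_\f$ for the part $\mu_\f$ of the core lying in $\f$; and (ii) confirming in claim (3) that the concrete model $\m$ realizing the feasible point $\mx$ genuinely satisfies $\rf$ (so the separating model is legitimate) while simultaneously lying inside the region blocked by the fresh clause. Everything else is routine monotonicity together with the observation that a conjunction of unit assumptions negates to one disjunctive clause.
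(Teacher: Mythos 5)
Your proof is correct and follows essentially the same route as the paper's: claim (1) by subset monotonicity, claim (2) by induction on iterations converting each returned core $\k$ into the entailment $\f \ent \lnot\k$, and claim (3) by exhibiting the $\rf$-feasible model $\m$ with $\m(V) = \mx$ that falsifies the freshly added clause. Your extra unpacking of the solver interface (splitting the core $\mu$ into its $\f$-part and $\k$) only makes explicit what the paper asserts directly when it says ``we know that $\f \land \k$ is unsatisfiable.''
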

 \begin{proof}
   Each statement will be proven in turn.
   
   The first is true because $\rf \subseteq \rff$, by construction
   (line~\ref{alg:5}).

   We prove the second by induction on the number of
   iterations. Initially $\rf$ is empty.  Therefore, $\rf$ relaxes any
   formula, in particular $\f$. Assume $\f \ent \rf$ for some
   iteration. Consider one of the clauses $\lnot\k$ added at
   line~\ref{alg:5}. We know that $\f \land \k$ is unsatisfiable.
   Therefore,
   $ \f \land \k \ent \false \implies \lnot (\f \land \k) \ent \true
   \iff \f \ent \lnot \k$. Given the assumption $\f \ent \rf$, we get
   $\f \ent \rf \land \lnot \k$.  Repeating the process for the other
   added clauses $\lnot \k_i$, we get
   $\f \ent \rf \land \lnot \k_1 \ldots \land \lnot \k_n \equiv \rff$.

   Assume $\rff$ is a relaxation of $\rf$. Then, any $\rf$-feasible
   model $\m$ is also $\rff$-feasible. We will prove there is at least
   one model that violates this. To start, note that it only makes
   sense to consider $\rff$ if there is some non-empty core $\k$ in
   the diagnosis $\diagno$, otherwise the algorithm would have
   terminated before updating $\rf$ into $\rff$. Let $\k$ be one
   element of $\diagno$, generated at line~\ref{alg:30} while $\rf$ is
   current.  Consider the Boolean tuple $\mx \in \rapf$ used to build
   the assumptions of the query that generated $\k$. Let
   $\m: \m(V) = \mx$. The model $\m$ is $\rf$-feasible, because it is
   part of the arg-front of $\rf$. The model $\m$ satisfies $\k$
   because $\k \subseteq \A_{\mx}$ and the way $\A_{\mx}$ is
   constructed (line~\ref{alg:10}, Equation~\eqref{eq:28}). Therefore,
   $\m$ does not satisfy $\lnot \k$. Because
   $\lnot \k \subseteq \rff$, $\m$ cannot satisfy $\rff$, \ie, there
   is at least one $\rf$-feasible model that is not $\rff$-feasible.
 \end{proof}

 \begin{proposition}
  Algorithm~\ref{alg:hitting-sets-x} is sound.
\end{proposition}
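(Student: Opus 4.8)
The plan is to verify that the unique return statement (line~\ref{alg:4}) fires only when $\rapf$ is a genuine arg-front of the instance $\pbrac{\f,V,\ofv}$; soundness then follows, since this is the only output the algorithm ever produces. The whole argument therefore hinges on analysing the guard $\diagno = \emptyset$ that protects that return, and on chaining the two comparison lemmas already established.

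First I would record the two properties in force at the return point. By line~\ref{alg:28}, the set $\rapf$ is an arg-front of the current relaxed formula $\rf$, as delivered by the auxiliary solver. Moreover, the diagnosis loop (lines~\ref{alg:6}--\ref{alg:8}) inserts into $\diagno$ exactly those cores $\k$ that are non-empty, so $\diagno = \emptyset$ holds precisely when the oracle returned an empty core for every $\mx \in \rapf$. Invoking the oracle convention recalled just after Equation~\eqref{eq:28} — namely that $\k$ is void iff $\f \land \A_{\mx}$ is satisfiable — together with the fact that $\A_{\mx}$ is the complete assignment of the model $\m$ with $\m(V)=\mx$ (line~\ref{alg:10}, Equation~\eqref{eq:28}), I would conclude that an empty core for $\mx$ is equivalent to $\mx$ being $\f$-feasible. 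Hence $\diagno = \emptyset$ is equivalent to the statement that \emph{every} point of $\rapf$ is $\f$-feasible.

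Next I would relate $\rapf$ to the true front through the relaxation. Lemma~\ref{prop:11} (second item) guarantees $\f \ent \rf$ at every iteration, so $\rf$ relaxes $\f$ and consequently $\fs(\f) \subseteq \fs(\rf)$. Feeding this inclusion into Lemma~\ref{prop:super-feasible-space} — optimizing over the larger feasible space $\fs(\rf)$ returns a lower bound set of the optimization over the smaller space $\fs(\f)$ — yields that $\rapf$ is a lower bound set of $\apf$, i.e. $\rapf \<pd>[1] \apf$. I would then invoke Lemma~\ref{prop:2}: a lower bound set of the Pareto front of $\fs(\f)$ whose every element is $\f$-feasible is itself an arg-front. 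The feasibility premise is exactly the consequence of $\diagno = \emptyset$ obtained in the previous paragraph, so the two lemmas combine to certify that $\rapf$ is an arg-front of $\pbrac{\f,V,\ofv}$, which is what soundness demands.

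The main obstacle, modest in this setting, is the faithful translation of the algorithmic termination test into the feasibility hypothesis required by Lemma~\ref{prop:2}: one must lean carefully on the oracle semantics to be sure that a void core is never returned for an infeasible point, so that $\diagno = \emptyset$ genuinely certifies that no infeasible element of $\rapf$ escaped detection. Once that equivalence is pinned down, the remaining steps are a direct concatenation of Lemmas~\ref{prop:super-feasible-space} and~\ref{prop:2}.
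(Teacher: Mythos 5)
Your proof is correct and follows essentially the same route as the paper's own: chain Lemma~\ref{prop:11} (the relaxed formula relaxes $\f$), Lemma~\ref{prop:super-feasible-space} (hence $\rapf$ is a lower bound set of $\apf$), and Lemma~\ref{prop:2} (a fully feasible lower bound set is an arg-front). Your only addition is a more careful unpacking of the termination guard $\diagno = \emptyset$ via the oracle semantics, which the paper states in one sentence; this is a welcome elaboration, not a different argument.
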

\begin{proof}
  By Lemma~\ref{prop:11}, $\rf$ relaxes $\f$ and therefore $\rapf$
  solves a relaxation of the original problem. By
  Lemma~\ref{prop:super-feasible-space}, it is a lower bound set of
  $\apf$. When the algorithm returns, all elements of $\rapf$ are
  feasible. By Lemma~\ref{prop:2}, $\rapf$ must be an arg-front.
\end{proof}

\begin{proposition}
  Algorithm~\ref{alg:hitting-sets-x} is complete.
\end{proposition}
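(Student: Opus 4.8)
The plan is to reduce completeness to termination: since the algorithm has already been shown sound, what remains is to show that the main loop at line~\ref{alg:2} is entered only finitely many times, so that the return at line~\ref{alg:4} is eventually reached. I would split the argument into two parts — first that each individual iteration halts, and second that there can be only finitely many iterations.

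For the per-iteration part, I would observe that the only step inside one iteration that could in principle run forever is the computation of $\rapf = \afront_\rf \ofv$ at line~\ref{alg:28}, which is delegated to an auxiliary MOCO solver. Taking \CoreGuidedAlg for this task, its completeness (established in the previous section) guarantees the call halts and returns a finite arg-front. The inner \texttt{ForEach} then ranges over the finite set $\rapf$, and each pass performs a single terminating \<sat> call (line~\ref{alg:30}). Hence every iteration of the outer loop terminates.

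For the second and central part I would argue by contradiction. Suppose the loop never returns. Then at every iteration $\diagno \neq \emptyset$, for otherwise the guard at line~\ref{alg:7} would fire the return at line~\ref{alg:4}; consequently at least one non-empty core is processed and at least one clause $\lnot\k$ is appended to the relaxed formula at line~\ref{alg:5}. Writing $\rf$ and $\rff$ for the relaxed formulas at two consecutive iterations, Lemma~\ref{prop:11} gives $\rff \ent \rf$ together with $\rf \not\ent \rff$. The first entailment yields $\fs(\rff) \subseteq \fs(\rf)$, while the second exhibits an $\rf$-feasible model that does not satisfy $\rff$, i.e.\ $\fs(\rf) \not\subseteq \fs(\rff)$; combining the two gives the \emph{strict} inclusion $\fs(\rff) \subsetneq \fs(\rf)$.

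Finally, I would invoke finiteness. Since every assumption literal built at line~\ref{alg:10} (Equation~\eqref{eq:28}) ranges over $V$, each clause $\lnot\k$ added at line~\ref{alg:5} is over $V$, so every $\rf$ is a formula on the $n = |V|$ original variables and $\fs(\rf) \subseteq \PBool^{n}$. A never-returning run would therefore produce an infinite strictly decreasing chain $\fs(\rf_1) \supsetneq \fs(\rf_2) \supsetneq \cdots$ of subsets of the finite set $\PBool^{n}$, which is impossible. Hence the loop terminates, and together with soundness this proves completeness. The main obstacle I anticipate is the bookkeeping that converts the entailment statements of Lemma~\ref{prop:11} into the strict set inclusion above: one must check that consecutive relaxations live over the same variable set $V$, so that entailment coincides with containment of feasible spaces, and confirm that each non-returning iteration genuinely shrinks $\fs(\rf)$ rather than merely re-adding clauses already implied — a point that Lemma~\ref{prop:11}(3) is designed to settle.
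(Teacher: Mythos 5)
Your proposal is correct and takes essentially the same approach as the paper's proof: argue by contradiction, use Lemma~\ref{prop:11} to show that every non-returning iteration strictly shrinks the feasible space of the relaxed formula, and conclude from the finiteness of the set of possible feasible spaces (subsets of $\PBool^{|V|}$) that only finitely many iterations can occur. Your write-up additionally verifies that each individual iteration terminates (via completeness of \CoreGuidedAlg and finiteness of $\rapf$), a point the paper leaves implicit, but the core argument is the same.
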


\begin{proof}
  Assume Algorithm~\ref{alg:hitting-sets-x} never ends, implying
  $\rapf$ is never completely feasible (\ie, $\rapf \nsubseteq
  \fs$). The number of relaxed feasible spaces $\rfs$ is finite. If
  Algorithm~\ref{alg:hitting-sets-x} does not end, it will enumerate
  all of them, never repeating any: at any iteration, the updated
  relaxed formula effectively blocks the reappearance of any feasible
  space seen before, because by Lemma~\ref{prop:11} the updated value
  $\rff$ strictly tightens $\rf$. Then, this sequence is necessarily
  finite, and so must be the number of iterations. But in that case,
  Algorithm~\ref{alg:hitting-sets-x} must end, and we have a
  contradiction.
\end{proof}

Consider the sequence whose entries
are the value of $\ofv(\rapf)$ computed at the beginning of each
iteration of the main cycle at line~\ref{alg:2}. The last
element of this sequence is the solution. It may happen that for some
$i$, the entries indexed by $i$ and $i+1$ are the same.  Therefore,
the sequence may include blocks of contiguous entries that share the
same value. In the worst case scenario, there are many different
arg-fronts for the same img-front, and the algorithm ends up
enumerating all of them without any movement in the objective space.
We expect the algorithm will be effective whenever a few of the
relaxed problems are enough to get to the full solution. Otherwise, we
can end up solving an exponential number of problems.

\section{Experimental Results}
\label{sec:results}
\def\scale{.9}

This section evaluates the performance of the algorithms proposed in
Sections~\ref{sec:unsat} and~\ref{sec:hitting-set}. These new algorithms
are compared against other SAT-based solvers for MOCO.

\subsection{Algorithms and Implementation}

The \CoreGuidedAlg algorithm proposed in Algorithm~\ref{alg:moco-unsat} uses the selection
delimiter encoding~\cite{DBLP:journals/constraints/KarpinskiP19} that has been
shown to be more compact. Next, the selection delimiter encoding is extended
such that an unary encoding is produced for each objective function.
Additionally, an order encoding~\cite{tamura2008sugar} is also used on the
unary representation of each objective function.
We refer the interested reader to the literature for further details on this
and other encodings~\cite{DBLP:series/faia/RousselM09,DBLP:conf/cp/0001MM15,DBLP:journals/constraints/KarpinskiP19,DBLP:conf/sat/KarpinskiP20}. Observe that any unary encoding from PB into CNF
can be used.

The \CoreGuidedAlgStrat algorithm is an alternative implementation of
Algorithm~\ref{alg:moco-unsat} that uses the well-known technique of
stratification~\cite{DBLP:conf/cp/AnsoteguiBGL12,DBLP:conf/ijcai/MenciaPM15,DBLP:conf/ijcai/Terra-NevesLM18}.
In this case, literals in each objective function are split into
partitions according to their weights. \CoreGuidedAlgStrat
first solves the MOCO instance considering only the literals in the
partition with highest coefficients. Next, after finding an
approximation of the Pareto front using only part of the objective
functions, the next partition with highest coefficients for each
objective function is uncovered. This process is repeated until all
literals in all objective functions are represented.

The \HittingSetsAlg algorithm implements Algorithm~\ref{alg:hitting-sets-x}.
This hitting set based approach uses Algorithm~\ref{alg:moco-unsat}
in order to find the relaxed arg-front (line~\ref{alg:28} of
Algorithm~\ref{alg:hitting-sets-x}).

The \PMinimalAlg algorithm implements a SAT-UNSAT based approach based on the
enumeration of $P$-Minimal models~\cite{DBLP:conf/cp/SohBTB17}.
This algorithm is implemented with the same PB to CNF encoding as the \CoreGuidedAlg.
Finally, the \ParetoMCSAlg is based on the stratified enumeration of
Minimal Correction Subsets. We used the publicly available implementation
of \ParetoMCSAlg\footnote{\url{https://gitlab.ow2.org/sat4j/moco}}.

\subsection{Benchmark Sets}

The following MOCO problems were considered:
the multi-objective Development Assurance Level (DAL) Problem~\cite{DBLP:conf/safecomp/BieberDS11},
the multi-objective Flying Tourist Problem (FTP)~\cite{DBLP:journals/eswa/MarquesRR19}, and
the multi-objective Set Covering (SC) Problem~\cite{DBLP:conf/cp/BergmanC16,DBLP:conf/cp/SohBTB17}.

The DAL benchmark set encodes different levels of rigor of the development of a software or
hardware component of an aircraft. The development assurance level defines the
assurance activities aimed at eliminating design and coding errors that could affect the
safety of an aircraft. The goal is to allocate the smallest DAL to functions in order to
decrease the development costs~\footnote{The benchmark instances are available online at
\url{https://www.lifl.fr/LION9/challenge.php}. Although they define a lexicographic order
to the objective functions, in the context of this paper, we ignore this order and compute
the Pareto front.}.

The FTP benchmark instances encode the problem of a tourist that is
looking for a flight travel route to visit $n$ cities. The tourist
defines its home city, the start and end of the route. She specifies
the number of days $d_i$ to be spent on each city $c_i$
($1 \le i \le n$) and also a time window for the complete trip. The
problem is to find the route that minimizes the time spent on
flights and the sum of the prices of the tickets.

The SC benchmark is a generalization of the classical set covering
problem. Let $X$ be some ground set and $A$ a cover of $X$. Each
element in $A$ has an associated cost tuple. The goal is to find a
cover of $X$ contained in $A$ that Pareto-optimizes the overall cost.

\subsection{Experimental Setup and Evaluation Metrics}

All results were obtained on a Intel Xeon Silver 4110 CPU @ 2.10GHz,
with 64~GB of RAM. Each tool was executed on each instance with a time out
of 1 hour and 10~GB of memory.

Finding the whole Pareto front is extremely hard for most problem
instances. All tested algorithms are exact, but in most cases only an
approximation of the Pareto front could be found within the time limit.
In order to evaluate the quality of the approximations provided by each
tool, we use the Hypervolume (HV)~\cite{DBLP:phd/dnb/Zitzler99} indicator.
Since the HV indicator needs a reference front, for each benchmark
instance we combined the approximations produced by all algorithms to
build the reference front.
HV is a metric that measures the volume of the objective space dominated
by an approximation, up to a given reference point, and thus larger
values are preferred. A normalization procedure is carried out so that
the values of HV are always between 0 and 1.

\subsection{Results and Analysis}

Figure~\ref{fig:DALBOBHV} shows a cactus plot of all tools on the DAL
benchmark set. In this case, the SAT-UNSAT approach from \PMinimalAlg
is the overall best performing algorithm. The size of the encoding of
the objective functions is not very large. The \CoreGuidedAlgStrat and
\HittingSetsAlg algorithms take more iterations to find the set
of solutions in the Pareto front. This is more notorious in \HittingSetsAlg,
as many hitting sets have to be enumerated.
Nevertheless, in some instances, the number of MCS is large and
the \ParetoMCSAlg was unable to find MCSs in the Pareto front.
Overall, its performance is similar to our algorithms.

\begin{figure}[t]
  \centering
  \includegraphics[scale=\scale]{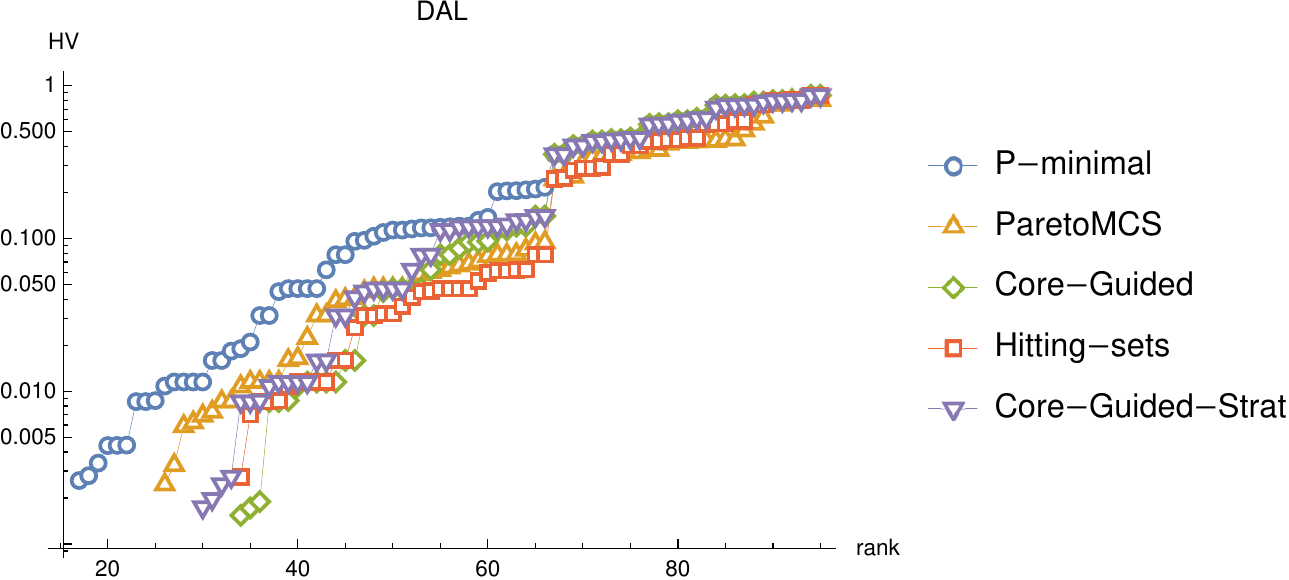}  
  \caption{Comparison of the $HV$ results for the DAL instances. Each
    series is sorted independently, smaller values first. Vertical
    scale is logarithmical.}
  \label{fig:DALBOBHV}
\end{figure}

The results for the FTP benchmark set are provided in
Figure~\ref{fig:FTPBOBHV}. In this case, the use of stratification
shows to be crucial. Observe that both \CoreGuidedAlgStrat and
\ParetoMCSAlg use this technique.
Note that \ParetoMCSAlg does not have an explicit representation
of the objective functions. However, the representation used in
\CoreGuidedAlgStrat is still effective for these instances, despite
some large coefficients in the objective functions.
As a result, these algorithms are able to quickly find a solution close
to the Pareto front, providing better overall performance than
the other algorithms.

\begin{figure}[t]
  \centering
  \includegraphics[scale=\scale]{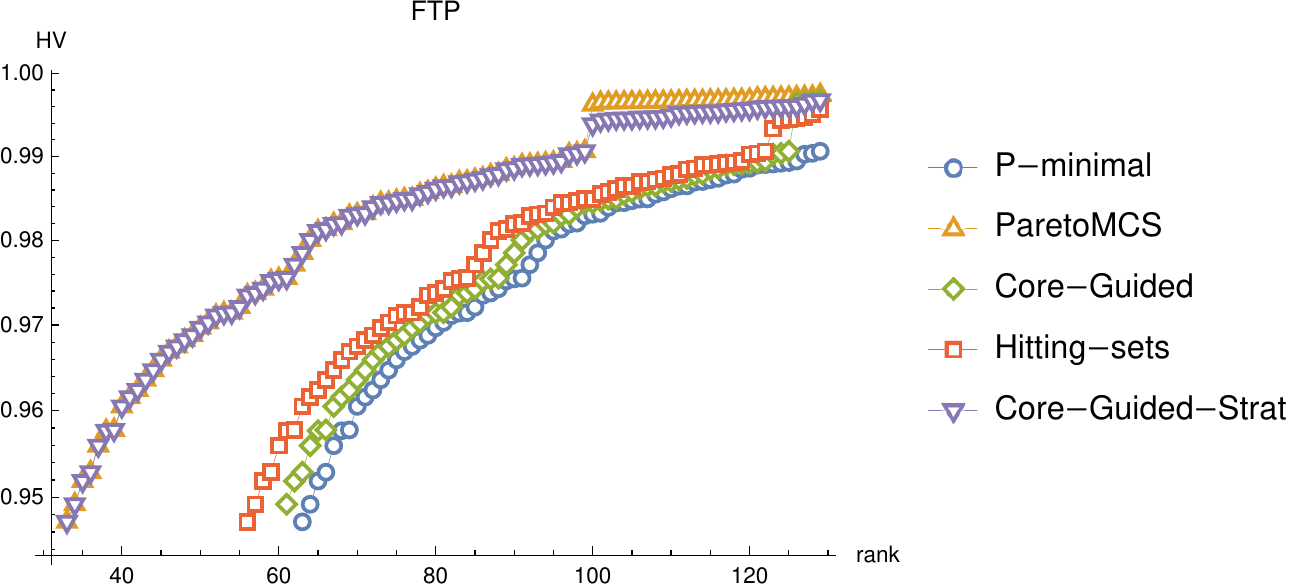}  
  \caption{Comparison of the $HV$ results for the FTP instances. Vertical scale is logarithmical.
  }
\label{fig:FTPBOBHV}
\end{figure}

Figure~\ref{fig:SCBOBHV} shows the results for the SC benchmark set.
In these instances, the \CoreGuidedAlgStrat was able to outperform
all other algorithms, since it does not need to relax all variables
to find solutions in the Pareto front. Note that when \HittingSetsAlg
is able to find solutions, these are in the Pareto front, providing
higher HV values in these cases. However, a common feature of the
\HittingSetsAlg is the need to enumerate many hitting sets before
being able to find a feasible solution.
Despite using stratification, \ParetoMCSAlg is unable to perform
as well as \CoreGuidedAlgStrat.

\begin{figure}[t]
  \centering
  \includegraphics[scale=\scale]{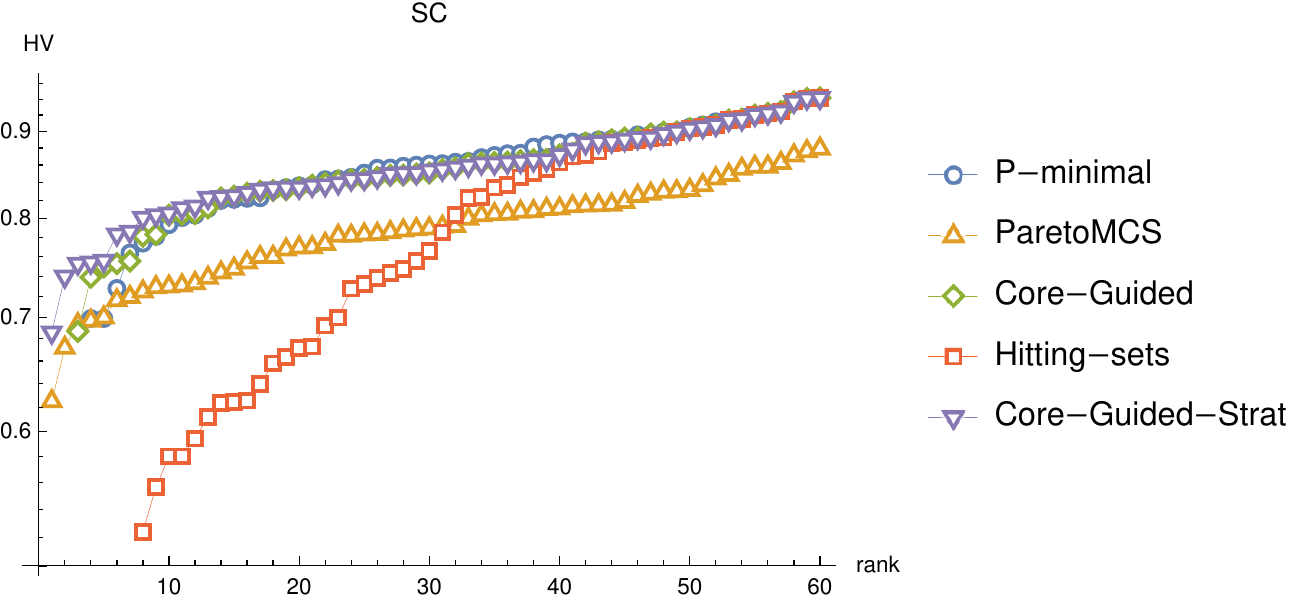}  
  \caption{Comparison of the $HV$ results for the SC
    instances. Vertical scale is logarithmical.}
\label{fig:SCBOBHV}
\end{figure}

\section{Conclusions}
\label{sec:conc}

This paper proposes two new algorithms for Multi-Objective Combinatorial
Optimization (MOCO). The first is a core-guided approach, while the
second is based on the enumeration of hitting sets.
These are the first algorithms for MOCO that follow
an UNSAT-SAT strategy.

Experimental results on three different sets of benchmark instances
show that the new core-guided approach results in a robust algorithm
that is competitive or outperforms other SAT-based algorithms for
MOCO. Using unary counters to express Pareto dominance in CNF shows to
be an effective way to harness the power of SAT solvers in solving
MOCO. The ability to express concepts related to dominance makes the
algorithms conceptually simple, and is therefore a useful tool in
developing other MOCO solvers based on SAT Oracles.

The algorithm based on the hitting set approach uses the
core-guided MOCO algorithm to incrementally enumerate the
hitting sets. The current implementation of the hitting set
algorithm is not competitive in some benchmark
instances due to having to enumerate a large number of
hitting sets.
The usage of the well-known stratification technique has a
significant impact on the performance of the core-guided algorithm
in several sets of instances. This technique is yet to be
implemented in the hitting set approach and we expect
that including stratification will allow to cut a large number
of the algorithm's iterations.

Overall, the new algorithms extend the state of the art
algorithms for MOCO based on calls to a SAT solver by
complementing the existing tools.

\bibliography{references}
\printacronyms
\printglossary[style=symbols]
\end{document}